\pgfplotsset{compat=1.16} 
\newtheorem{assumption}{Assumption}[section]
\newtheorem{lemma}{Lemma}[section]
\newtheorem{theorem}{Theorem}[section]
\newcommand{\blue}{\color{black}}
\begin{document}

\title[Adaptive Optimization for Prediction with Missing Data]{Adaptive Optimization for Prediction with Missing Data}


\author[1]{\fnm{Dimitris} \sur{Bertsimas}}\email{dbertsim@mit.edu}

\author[2]{\fnm{Arthur} \sur{Delarue}}\email{arthur.delarue@isye.gatech.edu}

\author[3]{\fnm{Jean} \sur{Pauphilet}}\email{jpauphilet@london.edu}

\affil[1]{\orgdiv{Sloan School of Management}, \orgname{Massachusetts Institute of Technology}, \orgaddress{\street{77 Massachusetts Ave}, \city{Cambridge}, \state{MA}, \country{USA}}}

\affil[2]{\orgdiv{H. Milton Stewart School of Industrial and Systems Engineering}, \orgname{Georgia Institute of Technology}, \orgaddress{\street{755 Ferst Dr}, \city{Atlanta}, \state{GA}, \country{USA}}}

\affil[3]{\orgname{London Business School}, \orgaddress{\street{Regent's Park}, \city{NW1 4SA London}, \country{UK}}}


\abstract{When training predictive models on data with missing entries, the most widely used and versatile approach is a pipeline technique where we first impute missing entries and then compute predictions. 
    In this paper, we view prediction with missing data as a two-stage adaptive optimization problem and propose a new class of models, \emph{adaptive} linear regression models, where the regression coefficients adapt to the set of observed features. 
    We show that some adaptive linear regression models are equivalent to  learning an imputation rule and a downstream linear regression model simultaneously instead of sequentially. We leverage this joint-impute-then-regress interpretation to generalize our framework to non-linear models. 
    In settings where data is strongly not missing at random, our methods achieve a 2--10\% improvement in out-of-sample accuracy.}

\keywords{missing data, adaptive optimization}



\maketitle

\section{Introduction}

Real-world datasets usually combine information from multiple sources, with different measurement units, data encoding, or structure, leading to a myriad of inconsistencies. In particular, they often come with partially observed features.
In contrast, most supervised learning models require that all input features are available for every data point.

Missing data have been studied in the statistical inference literature for several decades \citep[see, for instance,][]{little2019statistical}. The typical assumption is that data entries are missing \emph{at random} (MAR). 
Under the MAR assumption, a valid inference methodology is to \emph{impute-then-estimate}, i.e., guess the missing values as accurately as possible, then estimate the parameter of interest on the imputed dataset \citep[and even generate confidence intervals, as in][]{rubin2004multiple}. However, most inference guarantees become invalid as soon as the data is \emph{not} missing at random (NMAR); and except in some special circumstances, the validity of the MAR assumption cannot be tested from the data \citep{little1988test,jaeger2006testing}.

In practice, prediction problems with missing data are often treated as if they were inference problems. A typical approach is \emph{impute-then-regress}, where one first imputes missing values, then trains a model on the imputed dataset {\blue\citep[see][for a review]{emmanuel2021survey}}. These two models (imputation and regression) are often learned in isolation. 
Because they are designed with inference in mind, imputation methods are sound under the MAR assumption (which makes it possible to guess missing feature values from observed values of the same feature) but may not be reliable when the data is not MAR. 

{\blue Our work contributes to an active stream of research that revisits the findings of the inference literature on missing data for supervised learning tasks. 
For example, \citet{bertsimas2024simple} and \citet{Josse2019a} promote, with theoretical and empirical evidence, the use of simple imputation rules (e.g., mean imputation) for prediction, despite the fact that they are not suited for inference.
\citet{saar2007handling} provide early evidence that imputation may be sub-optimal and that training different models based on the set of features available can achieve higher performance. Their approach, which corresponds to the `fully adaptive' method in our framework, considers training a model for each possible sparsity pattern, which can be computationally prohibitive. \citet{le2020neumiss,le2021sa} propose a tailored neural network architecture to approximate this fully adaptive regressor efficiently. To the best of our knowledge, the idea of adapting the prediction rules to the set of available features has mostly been investigated for tree-based methods. Strategies implemented in popular packages include tailoring the splits to each observation in a `lazy' fashion \citep{friedman1996lazy}, using surrogate splits when information for evaluating a split is missing, or using the missingness indicator as an explicit predictive feature \citep{twala2008good}. We refer to \citet[section 6]{Josse2019a} for an extensive review and numerical comparison of these approaches. 
}

The goal of this paper is to develop two alternative approaches to natively handle missing values in prediction, while being agnostic to the (unknown) missingness mechanism. 

We first develop a novel linear regression framework in which the predictive model \emph{adapts} to the set of available features (Section \ref{sec:adaptive}). More precisely, we treat prediction with missing data as a two-stage problem in which we first observe which features are missing, then choose a predictive model to apply to the observed features. The two-stage view allows us to propose a hierarchy of adaptive models (constant, affine, polynomial, piecewise constant) following the principles of adaptive optimization. Our adaptive linear regression models extend the missing indicator method \citep{van2023missing}, which is mostly used for decision tree models \citep{twala2008good}. 

In special cases, we show that adaptive linear regression models are equivalent to an impute-then-regress strategy where the imputation and prediction models are learned jointly instead of sequentially (Section \ref{sec:extension.adaptive}). 
Based on this insight, we further extend the adaptive linear regression framework to non-linear models, in which we jointly train a simple imputation model (for missing features) and an arbitrary non-linear prediction model (for the target variable) using alternating optimization.

In experiments on synthetic (Section \ref{ssec:extension.evaluation}) and real data (Section \ref{sec:realdata}), we find that all methods perform comparably when data is missing at random but that the two algorithms we propose outperform the state-of-practice quite significantly when data is not missing at random, by up to 10\% on synthetic data and 2\% on real data. 
Since the MAR assumption cannot be tested from the data itself, the true missingness mechanism is always unknown in practice and models that enjoy robust performance irrespective of the missingness mechanism like ours are highly desirable.

\section{Linear Regression with Missing Values} \label{sec:adaptive}
In this section, we present our framework for adaptive linear regression. After introducing our generic framework in Section \ref{ssec:adaptive.framework}, we propose a hierarchy of approximations inspired by methods from the adaptive optimization literature in Section \ref{ssec:adaptive.hierarchy}. We derive finite-sample generalization bounds for models in our hierarchy in Section \ref{ssec:adaptive.generalization}.

\subsection{Model setup} \label{ssec:adaptive.framework}

We consider a prediction setting, where the goal is to predict the value of a target variable $y$ given a partially observed vector of covariates or features. Formally, we model the covariates as a random variable $\bm{X}$, and the target variable as a random variable $Y$. The random variable $\bm{M}$ is a binary vector which determines whether a particular feature is observed; $X_i$ is missing if $M_i=1$, and observed if $M_i=0$. In order to learn a predictive model, we have access to $n$ i.i.d. samples $(\bm{x}_i, \bm{m}_i)$, $i\in [n]$, where $\bm{x}_i \in \mathbb{R}^d$ is a sampled vector of covariates and $\bm{m}_i \in \{ 0,1 \}^d$ is the missingness \emph{indicator} or missingness \emph{pattern} of sample $i$. For clarity of notation, we further denote $\bm{o}(\bm{x}_i, \bm{m}_i)$ the  $(d - \| \bm{m}_i \|_0)$-dimensional vector of {\bf o}bserved covariates \citep{seaman2013meant}. Symmetrically, $\bm{o}(\bm{x}_i, \bm{1} - \bm{m}_i)$ is the vector of unobserved ones. With these notations, $(\bm{x}_i, \bm{m}_i)_{i\in [n]}$ corresponds to the dataset of \emph{realizations} while $(\bm{o}(\bm{x}_i, \bm{m}_i), \bm{m}_i)_{i\in [n]}$ is the dataset of \emph{observations}.

So far we have made no assumptions on the correlation between $\bm{X}$ and $\bm{M}$. The missing data literature distinguishes between two cases, or \emph{mechanisms}: missing at random (MAR) and not missing at random (NMAR). Formally, missing data are MAR if, conditioned on the observed covariates and target variable $(\bm{o}(\bm{X}, \bm{M}), Y)$, the missingness indicator $\bm{M}$ is independent of the unobserved covariates $\bm{o}(\bm{X}, \bm{1} - \bm{M})$. Otherwise, missing data are NMAR. The MAR assumption is necessary for the validity of standard ``impute-then-regress'' approaches, where one first trains a predictor $\mu(\cdot)$ for the missing entries $\mu(\bm{o}(\bm{x}, \bm{m}),\bm{m})\approx \mathbb{E}[o(\bm{X}, \bm{1} - \bm{M}) | \bm{M}=\bm{m}, \bm{o}(\bm{X}, \bm{M})=\bm{o}(\bm{x}, \bm{m})]$ (impute), then a second predictor for the target variable $\hat{y} = f(\bm{o}(\bm{x}, \bm{m}), \mu(\bm{o}(\bm{x}, \bm{m}),\bm{m}))$ (regress). The imputation model $\mu(\cdot)$ can only be learned under the MAR assumption. A key motivation of our work is to develop an approach that is agnostic to the missing data mechanism since it is an unverifiable property of the data.

The goal of our paper is therefore to learn $\mathbb{E}[Y | \bm{M}=\bm{m}, \bm{o}(\bm{X}, \bm{M})=\bm{o}(\bm{x}, \bm{m})]$ directly, i.e., build a predictive model that (a) applies to the observed vectors $\bm{o}(\bm{X}, \bm{M})$ directly, (b) leverages information from the missingness pattern $\bm{M}$, and (c) is agnostic to the missing data mechanism. 

Many learners, including linear models, cannot cope with the variable dimension of the vector $\bm{o}(\bm{x}, \bm{m})$ by design, hence cannot be applied to observations with missing entries directly. To address this issue, we adopt a simple convention: when computing the output of a linear model, we omit the features that are missing. Formally, given a linear model $\bm{w} \in \mathbb{R}^d$, we define (and denote) the output of the linear model $\bm{w}$ for the observation $\bm{x} \in \mathbb{R}^d$ and the missingness pattern $\bm{m} \in \{0,1\}^d$ as follows
\begin{align*}
    \langle \bm{w}, \bm{x} \rangle_{\bm{m}} := \sum_{\substack{j = 1 \\ m_j = 0}}^d w_j x_j = \sum_{j=1}^d w_j (1-m_j)x_j.
\end{align*}
Observe that the expressions above do not depend on the values of $x_j$ for the features $j$ that are missing ($m_j=1$). In other words, $\langle \bm{w}, \bm{x} \rangle_{\bm{m}}$ does not depend on the full vector $\bm{x}$ but only on its observed coordinates $\bm{o}(\bm{x},\bm{m})$. For concision, we will simply write $\langle \bm{w}, \bm{x} \rangle_{\bm{m}}$ (or later $f(\bm{x},\bm{m})$), despite the fact that these expressions technically only depend on $\bm{o}(\bm{x},\bm{m})$.
This convention is also equivalent to assuming that missing entries are systematically imputed by $0$.  

In addition, as previously discussed, we would like our predictive model to be able to leverage the information contained in the missingness pattern directly. To do so, we allow the weights of our linear models to depend explicitly on the set of features available, i.e., we consider prediction rules of the form 
\begin{align} \label{eqn:reg.full} 
    f(\bm{x}, \bm{m}) = \langle \bm{w}(\bm{m}), \bm{x} \rangle_{\bm{m}} = \sum_{j=1}^d w_j(\bm{m}) (1-m_j)x_j,
\end{align}
where the functions ${w}_j(\bm{m})$ are prescribed to a user-defined class $\mathcal{F}$. 
Given historical data $\{ (\bm{m}_i, \bm{o}(\bm{x}_i,\bm{m}_i),y_i),\ i=1,\dots,n\}$, the functions ${w}_j(\cdot)$ can be computed by solving an empirical risk minimization problem 
\begin{align} \label{eqn:adaptive.erm}
    \min_{\bm{w}(\cdot) \in \mathcal{F} } \: \dfrac{1}{n}\sum_{i=1}^n \ell \left( y_i, \langle \bm{w}(\bm{m}_i), \bm{x}_i \rangle_{\bm{m}_i} \right), 
\end{align}
for some loss function $\ell$ such as the squared error, $\ell(y,z) = (y-z)^2$.

\subsection{A hierarchy of adaptive linear regression models} \label{ssec:adaptive.hierarchy}
We now derive a hierarchy of linear models, according to how the functions $w_j(\bm{m})$ in \eqref{eqn:reg.full} depend on the missingness pattern, borrowing ideas from the multi-stage adaptive optimization literature. We first consider two extreme special cases for $\mathcal{F}$, namely the fully adaptive case and the static case. 

\paragraph{Fully adaptive regression.} The fully adaptive case corresponds to the situation where we consider a different linear model for each potential missingness patterns $\bm{m}$. Formally, each function $w_j(\bm{m})$ is of the form 
\[
    w_j(\bm{m}) = \sum_{\bm{m}' \in \{0,1\}^d} w_{j,m'} \bm{1}(\bm{m} = \bm{m}'),
\]
with
\[
    \bm{1}(\bm{m} = \bm{m}') := \begin{cases} 1 & \mbox{ if } \bm{m}=\bm{m'}, \\ 0 & \mbox{ otherwise.}\end{cases}
\]
Oberve that, since $\bm{m}$ is a binary vector, $w_j(\bm{m})$ can be considered as an order-$d$ polynomial in $\bm{m}$. Indeed, we have
\begin{align*}
  \bm{1}(\bm{m} = \bm{m}') &= \prod_{j=1}^d (1 - (m_j - m_j')^2) 
   =   \prod_{j=1}^d (1 - m_j - m_j' + 2m_j m_j').
\end{align*}
This approach is equivalent to partitioning the training dataset according to the set of available features and train a linear model on each part separately, {\blue as done in, e.g., \citet{saar2007handling}.}
There are two major drawbacks to this approach: first, it treats each missingness pattern separately, hence substantially reducing the number of observations available to fit each model. Second, there can be as many as $2^d$ potential missingness patterns, i.e, $2^d$ models to be trained, rendering the fully adaptive approach potentially intractable. Yet, often in practice, only $d' \ll d$ covariates might be subject to missingness and the actual number of patterns to consider can be substantially smaller. 

\paragraph{Static regression.} On the opposite side of the spectrum, we can consider a static model $\bm{w}(\bm{m}) = \bm{w}$ which does not depend on $\bm{m}$. This is equivalent to fitting a linear model on a full dataset where missing values are replaced by $0$. Note that this is not equivalent to applying mean-imputation on the data before training a linear model. By imputing missing values with a $0$, a missing feature effectively does not contribute to the output of the model, while any non-zero value would have affected the final prediction. 

Between these two extremes, one can consider specific functional forms for $\bm{w}(\bm{m})$ such as linear or polynomial functions that could be used to trade-off adaptivity and tractability. 

\paragraph{Affinely adaptive regression.} Affine policies are a successful and often-used tool in adaptive optimization, as they are typically more powerful than a static policy but more tractable than a fully adaptive one \citep{ben2004adjustable,bertsimas2012power}. An affinely adaptive linear model takes the form
\begin{align*}
{w}_j^{\text{affine}}(\bm{m}) = {w}_{0,j} + \sum_{j'=1}^d W_{j j'} m_{j'},
\end{align*}
or $\bm{w}_j^{\text{affine}}(\bm{m}) = \bm{w}_{0} +  \bm{W m}$ with matrix notations. Here, $\bm{w}_0$ corresponds to a baseline model to be used when all features are present and ${W}_{j j'}$ represents the linear correction to apply to the $w_{0,j}$ whenever feature $j'$ is missing. For a given observation $(\bm{o}(\bm{x},\bm{m}), \bm{m})$, a prediction is obtained by computing $\langle \bm{w}_0 + \bm{W} \bm{m}, \bm{x} \rangle_{\bm{m}} = \sum_{j}w_{0j} (1-m_j) x_j + \sum_{j,k} W_{jk} m_k (1-m_j) x_j$. Accordingly, fitting $\bm{w}_0$ and $\bm{W}$ is equivalent to fitting a linear regression model over the $d + d^2$ features of the form $(1-m_j) x_j$, for $j=1,\dots,d$ (zero-imputed dataset) and $m_k (1-m_j) x_j$ for $j,k = 1,\dots,d$. Intuitively, the model adapts to missing feature $j$ by adding a correction term to the weights of the other (observed) features.

\paragraph{Polynomially adaptive regression.} In addition to affine functions, polynomial decision rules have been theoretically and empirically investigated in the adaptive multi-stage optimization literature \citep[see, e.g.,][]{bertsimas2009hierarchy,bertsimas2011hierarchy}. Similarly, in our setting, one could consider weights that are order-$t$ polynomials, for some $t \in \{1,\dots,d\}$. In this case, each weight function $w_j(\bm{m})$ can be viewed as a linear function in the monomials of the form $\prod_{j \in \mathcal{J}} m_j$, with $\mathcal{J} \subseteq \{1,\dots,d\}$, $|\mathcal{J}| \leq t$. Hence, $f(\bm{x}, \bm{m})$ can be seen as a linear function in the $\mathcal{O}(d^{t+1})$ variables of the form $x_{j'} (1-m_{j'}) \prod_{j \in \mathcal{J}} m_j$ (and fitted as such). In this case, the correction terms to the weights of the observed features can depend on which combinations of features are missing: for example, $w_3$ might increase when either feature 1 or 2 is missing, but decrease when both are.

\paragraph{Finitely adaptive regression.} Another way to balance the trade-off between expressiveness of the adaptive model and tractability is finite adaptability. Formally, 
we can partition the space of all possible missingness patterns $\mathcal{M} \subseteq \{0,1\}^d$ into $Q$ disjoint subsets $\{\mathcal{M}_q\}_{q=1}^Q$ such that $\mathcal{M}=\cup_{q=1}^Q\mathcal{M}_q$ and define a distinct linear regression model for each $\mathcal{M}_q$, i.e., train a model of the form $\bm{w}(\bm{m}) = \sum_{q = 1}^Q \bm{w}_q \bm{1}(\bm{m} \in \mathcal{M}_q)$. As in the fully adaptive case, one can show that the indicator functions $\bm{1}(\bm{m} \in \mathcal{M}_q)$ are polynomial functions in $\bm{m}$. Hence, finitely adaptive regression can be seen as a special case of polynomially adaptive regression, where monomials are introduced dynamically.

\begin{algorithm}[ht]
\SetAlgoLined
\KwResult{Partition $\mathcal{P} = \{ \mathcal{M}_q,\ q \in [Q] \}$ and models $\{ \bm{w}^q,\ q \in [Q] \}$.}
initialization: $\mathcal{P} = \{ \mathcal{M}_1 = \mathcal{M} \}$\;
 \For{$\mathcal{M}_q \in \mathcal{P}$}{
 $\displaystyle j^\star\leftarrow \arg\min_{j, \bm{w}^{0}, \bm{w}^{1}} \: \sum_{i : \bm{m}_i \in \mathcal{M}_{q}^{j,0}} \ell \left( y_i, \langle \bm{w}^{0}, \bm{x}_i \rangle_{\bm{m}_i} \right) 
 + \sum_{i : \bm{m}_i \in \mathcal{M}_{q}^{j,1}} \ell \left( y_i, \langle \bm{w}^{1}, \bm{x}_i \rangle_{\bm{m}_i} \right)
$ \;
  \If{stopping criterion is not met}
  {split $\mathcal{M}_q$ along $j^\star$: $\mathcal{P} \leftarrow (\mathcal{P} \backslash \{ \mathcal{M}_q \}) \cup \{ \mathcal{M}^{j^\star,0}_q,\mathcal{M}^{j^\star,1}_q \} $}
 }
\caption{Iterative procedure for finitely adaptive regression} \label{alg:finite}
\end{algorithm}

The main difficulty of a finitely adaptive approach is in choosing the partition $\{\mathcal{M}_q\}_{q=1}^Q$.
We propose a greedy heuristic that simultaneously learns the partition $\{\mathcal{M}_q\}_{q=1}^Q$ and the appropriate regression models $\bm{w}_q$ based on recursive partitioning (Algorithm~\ref{alg:finite}).
At each iteration, we consider splitting each subset, denoted $\mathcal{M}_q$, into two subsets 
$\mathcal{M}_{q}^{j,0} := \mathcal{M}_{q} \cap \{\bm{m} : m_j=0\}$ and
$\mathcal{M}_{q}^{j,1}1 :=  \mathcal{M}_{q} \cap \{\bm{m} : m_j=1\}$ for all features $j$, and choose the feature $j^\star$ that leads to the highest reduction in empirical risk. 
To prevent overfitting, we add different stopping criteria that make further splitting inadmissible. 
For instance, we can impose a limit on the minimum number of samples per leaf, on the total depth of the resulting partitioning tree, or only allow splits that sufficiently reduce the in-sample error. 
{\blue We should note that the problem of finding the optimal partition $\{\mathcal{M}_q\}_{q=1}^Q$ can be formulated as a mixed-integer optimization problem. Future work could investigate the practical scalability and relevance of solving it via exact optimization methods \citep{bertsimas2017optimal} or column-generation heuristics \citep{patel2024improved} instead of our greedy procedure.}

\subsection{Finite-sample generalization bounds for adaptive linear regression} \label{ssec:adaptive.generalization}
Intuitively, choosing a more complex class of functions for our adaptive weights $w_j(\bm{m})$ can increase predictive accuracy, because it can model more complex relationships, but is also more prone to overfitting due to the increased number of parameters to calibrate. We now formalize this trade-off by deriving finite-sample bounds for adaptive linear regression models with polynomial weights. To conduct our analysis, we adopt a similar theoretical setting as \citet{le2020linear}.

\begin{assumption}\label{ass:finite.generative}We have $Y = f^\star(\bm{X}, \bm{M}) + \varepsilon$ where, conditional on $(\bm{X}, \bm{M})$, $\varepsilon$ is a centered noise with variance $\sigma^2$ and $f^\star(\bm{X}, \bm{M})$ is of the following form
\begin{align*}
    f^\star(\bm{X},\bm{M}) = \sum_{\bm{m} \in \{0,1\}^d} \left( \sum_{j=1}^d w^\star_{j,m} (1-M_j) X_j \right) \bm{1}(\bm{M} = \bm{m}).
\end{align*}
Furthermore, there exists $L >0$ such that $\| f^\star \|_\infty \leq L$. 
\end{assumption}
In the words of Section~\ref{ssec:adaptive.hierarchy}, Assumption~\ref{ass:finite.generative} states that the ground truth is a fully adaptive linear model. 
We refer to \citet[][proposition 4.1]{le2020linear} for specific conditions on the dependency between $\bm{X}$ and $Y$ and on the missingness mechanisms that lead to Assumption~\ref{ass:finite.generative}. 
{\blue In our numerical experiments (Section \ref{sec:numerics}), however, we consider a variety of settings, most of which do not satisfy this theoretical assumption.}

As in \citet[][chapter 13]{gyorfi2002distribution}, we consider a truncated version of our linear regression estimate, defined as $T_L f(x) := f(x)$ if $|f(x)| \leq L$, $L$ if $f(x) > L$, and $-L$ if $f(x) < -L$. With this assumption, we obtain the following result:

\begin{theorem}\label{prop:finite} Consider a function a class $\mathcal{F}$ and adaptive linear regression models of the form \eqref{eqn:reg.full} with $w_j(\bm{m}) \in \mathcal{F}$. Denote $\hat{f}_n$ the ordinary least square estimator obtained from a dataset of size $n$ and assume that fitting $\hat{f}_n$ is equivalent to fitting a linear model with $p(\mathcal{F})$ coefficients. Under Assumption~\ref{ass:finite.generative}, there exists a universal constant $c>0$ such that
\begin{align*}
    \mathbb{E}\left[ \left(Y - T_L \hat{f}_n(\bm{X},\bm{M})\right)^2 \right] \leq \sigma^2 + 8\, b(\mathcal{F}) + c \max\{\sigma^2, L\} \dfrac{1+\log n}{n} \, p(\mathcal{F}), 
\end{align*}
where $b(\mathcal{F})$ is an upper-bound on the estimation bias, $\min_{f \in \mathcal{F}} \mathbb{E}\left[ \left( f^\star(\bm{X}, \bm{M}) - f(\bm{X}, \bm{M}) \right)^2 \right]$ (hence depends on $\mathcal{F}$ and $d$ but is independent of $n$). Formulas/bounds for the constants $p(\mathcal{F})$ and $b(\mathcal{F})$ are given in Table~\ref{tab:finite.scaling}.
\end{theorem}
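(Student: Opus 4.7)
The plan is to follow the standard template for excess-risk analysis of truncated ordinary least squares, reducing the adaptive model to an OLS problem in a transformed feature space and then invoking a known risk bound, essentially Theorem 11.3 of \citet{gyorfi2002distribution}.

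First, I would decompose the mean squared prediction error around the truth $f^\star$ of Assumption~\ref{ass:finite.generative}. Because the test noise $\varepsilon$ is centered conditionally on $(\bm{X},\bm{M})$ and independent of the training sample, the cross term vanishes and, for a fresh test point,
$$\mathbb{E}\left[\left(Y - T_L\hat{f}_n(\bm{X},\bm{M})\right)^2\right] = \sigma^2 + \mathbb{E}\left[\left(f^\star(\bm{X},\bm{M}) - T_L\hat{f}_n(\bm{X},\bm{M})\right)^2\right].$$
It then remains to bound the $L^2(\mathbb{P})$ excess risk of $T_L\hat{f}_n$ with respect to $f^\star$.

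Next I would argue that, under the hypothesis of the theorem, fitting $\hat{f}_n$ is OLS over a linear function class of dimension $p(\mathcal{F})$. Writing $\mathcal{F}$ as the linear span of basis functions $\{\phi_k(\bm{m})\}_k$, the adaptive model class $\mathcal{G} = \{(\bm{x},\bm{m}) \mapsto \sum_j w_j(\bm{m})(1-m_j)x_j : w_j \in \mathcal{F}\}$ is spanned by the product features $\phi_k(\bm{m})(1-m_j)x_j$, giving exactly $p(\mathcal{F})$ coefficients. Applying the standard truncated-OLS risk bound for linear models to $\mathcal{G}$ yields a universal constant $c>0$ with
$$\mathbb{E}\left[\left(f^\star - T_L\hat{f}_n\right)^2\right] \leq 8 \inf_{f \in \mathcal{G}} \mathbb{E}\left[(f^\star - f)^2\right] + c\max\{\sigma^2, L\}\,\frac{(1+\log n)\,p(\mathcal{F})}{n}.$$
Since $\inf_{f \in \mathcal{G}} \mathbb{E}[(f^\star - f)^2]$ is by definition the estimation bias bounded above by $b(\mathcal{F})$, substituting into the decomposition delivers the stated inequality.

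The mechanics of the excess-risk argument are thus routine; the real work sits in the preparatory steps. The main obstacles are (i) verifying, for each $\mathcal{F}$ in the hierarchy of Section~\ref{ssec:adaptive.hierarchy}, that the OLS fit of $\hat{f}_n$ is indeed equivalent to a linear regression on $p(\mathcal{F})$ transformed features, and counting those features to get an explicit value of $p(\mathcal{F})$ (this is essentially the content of the informal observations for the constant, affine, polynomial, and fully adaptive cases); and (ii) bounding $b(\mathcal{F})$, which vanishes for the fully adaptive class by Assumption~\ref{ass:finite.generative} but for lower-order polynomial classes requires projecting the order-$d$ polynomial representation of $f^\star$ in $\bm{m}$ onto the subspace of low-degree polynomials, and tracking how the truncation interacts with $\|f^\star\|_\infty \leq L$. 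These per-class computations produce the entries of Table~\ref{tab:finite.scaling}; the remainder is a direct invocation of \citet{gyorfi2002distribution}.
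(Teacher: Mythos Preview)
Your proposal is correct and follows essentially the same route as the paper: decompose the risk around $f^\star$ to extract $\sigma^2$, then invoke Theorem~11.3 of \citet{gyorfi2002distribution} on the $p(\mathcal{F})$-dimensional linear reparametrization to obtain the $8\,b(\mathcal{F})$ plus $c\max\{\sigma^2,L\}(1+\log n)p(\mathcal{F})/n$ bound. The only place the paper is more specific than your sketch is the bias computation for order-$t$ polynomials: rather than an $L^2$ projection, it takes the order-$t$ Taylor expansion of $f^\star(\tilde{\bm{x}},\cdot)$ at $\mathbb{E}[\bm{M}]$, exploits the multilinearity of each $w_j^\star(\bm{m})$ to show that all order-$(t{+}1)$ partial derivatives are bounded by $2^{t+1}L$, and combines this with $\|\bm{M}-\mathbb{E}[\bm{M}]\|^2\le d/4$ and Stirling to obtain the $\mathcal{O}\bigl(d^{t+1}/(t{+}1)^{3(t+1)}\bigr)$ entry in Table~\ref{tab:finite.scaling}.
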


\begin{table}[ht]
    \centering
    \begin{tabular}{lcc}
        $\mathcal{F}$: class for $w_j(\bm{m})$ & $b(\mathcal{F})$ & $p(\mathcal{F})$ \\ 
        \midrule 
         Static & $\mathcal{O} \left( d \right)$ & $d$\\
         Affine & $\mathcal{O} \left( d^2 \right)$  & $d + d^2$\\
         Order-$t$ polynomial & $\mathcal{O} \left( \dfrac{d^{t+1}}{ (t+1)^{3(t+1)}} \right)$ & $\mathcal{O}(d^{t+1})$\\
         Fully adaptive & $0$ & $d \, 2^d$\\
         \bottomrule
    \end{tabular}
    \caption{Formula/bounds for the constants $b(\mathcal{F})$ and $p(\mathcal{F})$ involved in the finite-sample guarantees in Theorem~\ref{prop:finite} for different adaptive linear regression models in our hierarchy. }
    \label{tab:finite.scaling}
\end{table}

Intuitively, Theorem~\ref{prop:finite} shows that, as we consider higher-order polynomials as adaptive rules, the number of parameters to calibrate, $p(\mathcal{F})$ grows exponentially but the bias term (provided that the ground truth corresponds to a fully adaptive model as stated in Assumption~\ref{ass:finite.generative}) decays super-exponentially in $t$. {\blue Assumption \ref{ass:finite.generative} is needed in order to derive an analytical expression for (a bound on) the bias term ${b}(\mathcal{F})$. Without this assumption,  Theorem~\ref{prop:finite} remains valid with $b(\mathcal{F}) = \inf_{f \in \mathcal{F}} \mathbb{E}\left[ \left(f^\star(\bm{X},\bm{M}) - f(\bm{X},\bm{M})\right)^2 \right] $.}

{\blue In practice, we observe that the higher number of parameters $p(\mathcal{F})$ involved in polynomial functions (i.e., for models computed via Algorithm \ref{alg:finite}) leads to a higher risk of over-fitting, which in turn can deteriorate out-of-sample performance on instances with a small to moderate number of training samples (see Section \ref{ssec:extension.evaluation} and Appendix \ref{sec:add.num.syn}). On the other hand, we observe greater benefit by moving away from linear regressors, as we propose in Section~\ref{sec:extension.adaptive}.}

\subsection{Implementation details}
Our adaptive linear regression models can be viewed as linear regression models over an extended set of $p(\mathcal{F})$ features. To mitigate the issue of overfitting, we consider an $\ell_1$-$\ell_2$-penalized version of the empirical risk minimization problem \eqref{eqn:adaptive.erm}, 
\begin{align*} \blue
    \min_{\bm{w}(\cdot) \in \mathcal{F} } \: \dfrac{1}{n}\sum_{i=1}^n \ell \left( y_i, \langle \bm{w}(\bm{m}_i), \bm{x}_i \rangle_{\bm{m}_i} \right) + \sum_{k} \lambda_k \left[ \alpha |w_k| + (1-\alpha) \dfrac{w_k^2}{2} \right],
\end{align*}
commonly referred to as ElasticNet \citep[see][section 3.4.2]{friedman2001elements} and implemented in the package \texttt{glmnet}.
{\blue The parameter $\alpha$ controls the relative importance of the $\ell_1$-penalty compared with the $\ell_2^2$- or ridge regularization term. We cross-validate its value (via 5-fold cross-validation, like all other hyper-parameters in our experiments).}
{\blue A second} parameter, $\bm{\lambda}$, controls the amount of regularization.
As supported by intuition and theory, the more data is available, the less regularization is needed, {\blue so we should choose $\lambda_k \propto 1/n$}. However, in our case, many of the features are sparse binary features (e.g., $m_j$). For each of these features, the effective number of samples available to calibrate its coefficient value is not $n$ but the number of samples for which it is non-zero (e.g., $\sum_{i \in [n]} m_{ij} \leq n$). Accordingly, we apply a different $\lambda_k$ value for each $w_k$. {\blue If $w_k$ is applied to a feature of the form $x_j$ (resp. $x_j (1-m_j)$), we set $\lambda_k = \lambda /n_k$ where $n_k$ is the number of observations for which $x_j$ is available (resp. missing) and $\lambda$ is a scaling parameter we cross-validate.}

\section{Extension to Adaptive Non-Linear Regression} \label{sec:extension.adaptive}
We now extend the adaptive linear regression framework developed in Section~\ref{sec:adaptive} to generic non-linear models. To deal with inputs $\bm{o}(\bm{x},\bm{m})$ of varying dimension, we proposed in the linear case to set the missing features to zero. This convention appears natural for linear models since setting a variable to zero effectively implies that this variable does not contribute to the output. However, this convention is hard to generalize to other models (e.g., tree-based models) for which the impact of zero-imputation on the output is not predictable. To extend our adaptive regression framework, we first show a connection between adaptive linear models and impute-then-regress strategies where the imputation and regression models are learned jointly instead of sequentially (\ref{ssec:adaptive.imputation}). We refer to this family of strategies as \emph{joint} impute-then-regress, and propose a heuristic to compute them with non-linear models in Section \ref{ssec:extension.heuristic}. 

\subsection{Connection between adaptive regression and optimal impute-then-regress} \label{ssec:adaptive.imputation}
We consider a special case of affinely adaptive linear regression where all the regression coefficients are static except for the intercept (i.e., a feature $x_j$ that is constant, equal to $1$, and never missing), which depends on the missingness pattern in an affine way, i.e.,
$f(\bm{x}, \bm{m}) = b(\bm{m}) + \langle \bm{w}, \bm{x} \rangle_{\bm{m}}$,
where $b(\bm{m}) = b_0 + \sum_j b_j m_j$. In this case, the prediction function is 
\begin{align*}
f(\bm{x}, \bm{m}) &= b_0 + \sum_{j=1}^d \left( {w}_j  (1-{m}_j) {x}_j + b_j {m}_j \right)\\ &= b_0 + \sum_{j=1}^d w_j \left( (1-{m}_j) {x}_j + {m}_j  \dfrac{b_j}{w_j} \right).
\end{align*}
In other words, a static regression model with affinely adaptive intercept can be viewed as imputing $\mu_j := b_j / w_j$ for feature $j$ whenever it is missing, and then applying a linear model $\bm{w}$. The key difference with standard impute-then-regress strategies, however, is that the vector of imputed values $\bm{\mu}$ and the linear model $\bm{w}$ are computed simultaneously, instead of sequentially, hence leading to greater predictive power\footnote{Note that our algebraic manipulation, and the resulting interpretation, is valid only if $w_j \neq 0$. If $w_j = 0$ and $b_j \neq 0$, it means that feature $j$ is not a strong predictor of the outcome variable $y$ but the fact that it is missing, $m_j$, is.}. Note that this family of models correspond to the affine approximation in \citet[][definition 4.1]{le2020linear}.

In the simple case where there is only one feature $X_1$, this family of models would learn the rule $w_1 (X_1 (1-M_1) + \mu_1 M_1)$ with 
\begin{align*}
    \mu_1 = \dfrac{1}{w_1} \mathbb{E}[Y | M_1 = 1] = \dfrac{\mathbb{E}[Y | M_1 = 1] \mathbb{E}[X_1^2 | M_1 = 0]}{\mathbb{E}[Y X_1 | M_1 = 0]}.
\end{align*}
Compared with classical imputation methods, we observe that the imputed value does not only depend on the distribution of $X_1$ on the samples where it is observed ($M_1=0$). Rather, (a) it depends on the target variable $Y$, and (b) it involves observations for which $X_1$ is missing ($M_1=1$). In particular, if $Y$ satisfies a linear relationship $Y = w_1^\star X_1$, then $\mu_1 = \mathbb{E}[X_1 | M_1 = 1]$. In contrast, standard mean-imputation would select $\mu_1=\mathbb{E}[X_1 | M_1=0]$. We can interpret adaptive linear regression models as jointly learning an imputation rule and a predictive model, where more complex adaptive rules for the intercept can be interpreted as more sophisticated imputation rules. Jointly performing linear regression while imputing each feature by a constant is equivalent to a static regression model with an affinely adaptive intercept; such a model is very easy to compute in practice, corresponding to a linear regression model over $2d +1$ variables: the $d$ coordinates of $\bm{x}$ (with missing values imputed as 0), the $d$ coordinates of $\bm{m}$, and an intercept term $b_0$.

More generally, imputation and regression try to achieve different objectives (respectively predicting the missing entries and the target outcome) when they are considered sequentially. Considering them jointly allows imputation to account for downstream predictive accuracy. This echoes the smart predict-then-optimize approach of \citet{elmachtoub2022smart} that incorporates the downstream optimization goal into the training of the upstream predictive model.
We will use this interpretation to generalize our adaptive linear regression framework to non-linear predictors in the following section.

\subsection{Heuristic method for joint impute-and-regress} \label{ssec:extension.heuristic}
We now propose an iterative heuristic to jointly optimize for a simple imputation model $\bm{\mu}$ and a downstream predictive model $f$. 
{\blue Formally, the joint impute-then-regress problem solves the following empirical risk minimization problem
\begin{align} \label{eqn:jitr.erm}
    \min_{\bm{\mu} \in \mathbb{R}^d} \: \min_{f(\cdot)} \: \dfrac{1}{n}\sum_{i=1}^n \ell \left( y_i, f(\bm{x}^{\bm{\mu}}_i) \right), 
\end{align}
where both the imputed values $\bm{\mu}$ and the regressor $f (\cdot)$ are calibrated from data, with the objective to minimize in-sample error. Note that $\bm{x}^{\bm \mu}$ is linear in $\bm{\mu}$ as $x^\mu_j = x_j + m_j \mu_j$. However, when $f(\cdot)$ is an arbitrary non-linear function, such as the output layer from a deep neural network, the optimization problem \eqref{eqn:jitr.erm} is non-convex. The objective function in \eqref{eqn:jitr.erm} is obviously convex in $(f,\bm{\mu})$ only in the case when $f(\cdot) = \bm{w}^\top \cdot$ is linear (i.e., Section \ref{sec:adaptive}). Actually, when $f(\cdot)$ is non-linear, optimizing for $\bm{\mu}$ with $f(\cdot)$ fixed is not trivial either. To offer a simple and practical solution to this problem, we propose an alternating minimization procedure in Algorithm~\ref{alg:joint.itr}. 
}

{\blue Motivated by the strong empirical performance of mean imputation \citep{bertsimas2024simple},} we initialize $\bm{\mu}$ with the mean of each variable. 
At each iteration, we train the predictive model on the $\bm{\mu}$-imputed data, $\bm{X}^{\bm{\mu}}${\blue ---nowadays, open-source and efficient implementations algorithms for doing so are available for a wide range of problem classes}. Then, for a fixed model $f$, we update the vector of imputed values $\bm{\mu}$ so as to decrease the prediction $\operatorname{error}$ (e.g., mean squared error for regression, AUC for classification). {\blue As discussed earlier,} 
minimizing $\bm{\mu} \mapsto \operatorname{error}\left(Y, f \left(\bm{X}^{\mu}\right) \right)$ may not be tractable {\blue because $f$ is non-linear}, so we use a local search heuristic (or cyclic coordinate update heuristic) instead: 
For each coordinate $j \in [p]$, we {\blue myopically} update $\mu_j$ by increments of $\pm \sigma_j$ where $\sigma_j$ is the standard error on the mean of $X_j$, {\blue and stop after a fixed number of iterations (in our implementation, 20) or if no local change in $\bm{\mu}$ improves prediction error.} 
{\blue We terminate the outer loop in Algorithm~\ref{alg:joint.itr} after a fixed}
number of iterations (in our implementation, 10)
or when the relative improvement in prediction error {\blue falls below a user-specified threshold} ($10^{-4}$).

\begin{algorithm}[ht]
\SetAlgoLined
\KwResult{Vector of imputed values $\bm{\mu}$ and predictive model $f$.}
\Init{}{
$\mu_j \leftarrow \operatorname{mean}(X_j | M_j = 0)$\;
$\sigma_j \leftarrow \operatorname{std}(X_j) / \sqrt{n} $\;
}
\Repeat{}{
    $f \leftarrow $ best model to predict $Y$ given $\bm{X}^{\mu}$ \;
    \Repeat{}{
        \For{$j = 1,\dots,p$}{
            $\epsilon_j \leftarrow \arg\min_{\epsilon \in \{-1,0,1\}} \: \operatorname{error}\left(Y, f \left(\bm{X}^{\mu + \epsilon \sigma_j \bm{e}_j}\right)\right)$\;
            $\mu_j \leftarrow \mu_j + \epsilon_j \sigma_j$
    }
 }
}
\caption{Heuristic iterative procedure for joint $\bm{\mu}$-impute-then-regress} \label{alg:joint.itr}
\end{algorithm}

{\blue We acknowledge that the update of $\bm{\mu}$ in our algorithm follows a heuristic, local-search process that could be improved in future work, e.g., by investigating the potential applicability of non-convex numerical optimization algorithms \citep[in the spirit of, e.g.,][]{ghanbari2017black}. However, given that our main objective is to demonstrate the practical value of jointly estimating $\bm{\mu}$ and $f$, we decide to keep Algorithm~\ref{alg:joint.itr} simple and versatile as a first proof of concept.}

{\blue At a high-level, our alternating minimization procedure resembles the Expectation-Maximization (EM) algorithm. The main difference between our approach and the EM algorithm resides in the objective function of our optimization problem \eqref{eqn:jitr.erm}.  We usually refer to the EM algorithm in the context of maximum likelihood estimation. In this context, we need to assume that the data ---$(\bm{X},Y)$ in supervised learning--- is generated according to some distribution. We calibrate this distribution from data, to maximize the likelihood of the observations available. In turn, this distributional model can be used to make predictions on $Y$ given $\bm{X}$. The EM algorithm is used to estimate such models in presence of missing data: At each iteration, the missing values are imputed and replaced by their most likely values (according to the assumed distribution) and then the estimate of the data generating density is updated \citep[see, e.g.,][in logistic regression]{jiang2020logistic}. Because these approaches make assumptions on the data generating process, they implicitly require the MAR assumption (because the generating distribution has to be the same across observations with and without missing entries) and the imputed values are by definition `likely' (i.e., realistic of what the true value might have been). On the other hand, our approach falls under the empirical risk minimization paradigm. The regressor $f(\cdot)$ does not necessarily rely on assumptions on the data generating process, and the imputation rule $\bm{\mu}$ is designed to minimize prediction error, instead of maximizing an arbitrary likelihood function.}

\section{Numerical Experiments}\label{sec:numerics}

We now present a thorough numerical evaluation of our proposed adaptive regression and joint-impute-then-regress methods. Evaluating the performance of missing data methods can be tricky. Real datasets with missing values do not come with counterfactuals; meanwhile, synthetic datasets may not properly describe how data normally go missing. Therefore, in this section we benchmark our methods in a variety of settings, including synthetic and real data, to verify the robustness of their performance.

\subsection{Two experimental settings} \label{ssec:numerical-setup}

We consider two settings, one where the data is fully synthetic, and one where the design matrix $\bm{X}$ is taken from openly available datasets (for which we consider both synthetic and real signals $Y$). We provide a short overview of these setting here, and refer the reader to Appendix~\ref{sec:data} for further details.

In the {synthetic} setting, we follow a common approach in the literature \citep[see, e.g., ][section 7]{le2020linear} and sample each feature vector $\bm{x}$ from a multivariate Gaussian distribution. We then generate a (continuous) dependent variable $Y$ as a linear or nonlinear (neural network) function of $\bm{x}$. Finally, we impute data, i.e., generate missingness patterns, either completely at random (MCAR) or by censoring the extreme values (an extreme case of NMAR). Overall, we control the number of observations in our dataset ($n$, ranging from 40 to 1,000), the relationship between $Y$ and $\bm{X}$ (two alternatives), the missingness mechanisms (two mechanisms), and the fraction of missing entries ($p$, ranging from 0.1 to 0.8), leading to $49 \times 2 \times 2 \times 8 = 1,568$ different instances. {\blue Out of these four configurations ($Y$ linear/non-linear, MCAR/censoring), only one ($Y$ linear, MCAR) satisfies Assumption \ref{ass:finite.generative}.}

To complement these synthetic instances, we assemble a corpus of 63 publicly available datasets with missing data, from the UCI Machine Learning Repository and the RDatasets Repository. For these datasets, we consider both real-world and synthetic signals. More precisely, 46 of the 63 datasets identify a clear target outcome (or dependent variable)---we refer to this case as the `Real' setting (see Figure~\ref{fig:exp.designs}). 
In addition, we also create instances where the dependent variable is synthetically generated from the real feature vectors. To do so, we apply the following methodology for each dataset: First, we generate a fully imputed version of the dataset. Then, we generate a synthetic signal, as a function (again, linear or non-linear) of (a) the (imputed) features values only (Syn-MAR), or (b) of both the feature values and the missingness indicator (Syn-NMAR). We also consider an adversarial setting (c) where the dependent variable is generated as a function of the imputed feature values as in (a), but where we adversarially reallocate the missingness patterns across observations to ensure the data is not missing at random (Syn-AM). We graphically summarize the four experimental designs we construct from these real datasets in Figure \ref{fig:exp.designs} and provide a more detailed description of the datasets and the methodology in Section \ref{ssec:data.realx}.

\begin{figure}[ht]
    \centering
    \begin{subfigure}{.24\textwidth}
    \centering
\begin{tikzpicture}
\node [circle,thick,draw] (X) {$\bm{X}$};
\path (X) ++(-90:1in) node [circle,thick,draw] (M) {$\bm{M}$};
\path (X) ++(-30:1in) node [circle,thick,draw] (Y) {$Y$};

\draw[->] (X) -- (Y);
\draw[dashed] (X) -- (M) node [left=5pt,pos=0.5] {?} ;
\end{tikzpicture}
    \caption{Syn-MAR}
    \end{subfigure}
    \begin{subfigure}{.24\textwidth}
    \centering
\begin{tikzpicture}
\node [circle,thick,draw] (X) {$\bm{X}$};
\path (X) ++(-90:1in) node [circle,thick,draw] (M) {$\bm{M}$};
\path (X) ++(-30:1in) node [circle,thick,draw] (Y) {$Y$};
\draw[->] (X) -- (Y);
\draw[->] (M) -- (Y);
\draw[dashed] (X) -- (M) node [left=5pt,pos=0.5] {?} ;
\end{tikzpicture}
    \caption{Syn-NMAR}
    \end{subfigure}
    \begin{subfigure}{.24\textwidth}
    \centering
\begin{tikzpicture}
\node [circle,thick,draw] (X) {$\bm{X}$};
\path (X) ++(-90:1in) node [circle,thick,draw] (M) {$\bm{M}$};
\path (X) ++(-30:1in) node [circle,thick,draw] (Y) {$Y$};

\draw[->] (X) -- (Y);
\draw[->] (X) -- (M);
\end{tikzpicture}
    \caption{Syn-AM}
    \end{subfigure}
    \begin{subfigure}{.24\textwidth}
    \centering
\begin{tikzpicture}
\node [circle,thick,draw] (X) {$\bm{X}$};
\path (X) ++(-90:1in) node [circle,thick,draw] (M) {$\bm{M}$};
\path (X) ++(-30:1in) node [circle,thick,draw] (Y) {$Y$};
\draw[dashed] (X) -- (Y) node [above=5pt,pos=0.5] {?};
\draw[dashed] (X) -- (M) node [left=5pt,pos=0.5] {?};
\draw[dashed] (Y) -- (M) node [below=5pt,pos=0.5] {?} ;
\end{tikzpicture}
    \caption{Real}
    \end{subfigure}
    \caption{Graphical representation of the 4 experimental designs implemented in our benchmark simulations with real-world design matrix $\bm{X}$. Solid (resp. dashed) lines correspond to correlations explicitly (resp. not explicitly) controlled in our experiments.}
    \label{fig:exp.designs}
\end{figure}

\subsection{Experiments on synthetic data} \label{ssec:extension.evaluation}
As described in Section~\ref{ssec:numerical-setup}, we generate instances with 2 different models for the relationship between $Y$ and $\bm{X}$ (linear or neural network) and 2 missingness mechanisms (missing completely at random and censoring). We also vary the sample size and the proportion of missing entries. 

We first compare the performance of our heuristic for joint impute-then-regress presented in Section~\ref{ssec:extension.heuristic}, with that of mean impute-then-regress and of random forest (RF) trained directly on missing data models using the ``Missing Incorporated in Attribute'' (MIA) method of \citet{twala2008good}; see \citet[][remark 5]{Josse2019a} for its implementation.  For the first two methods,  
the downstream predictive model is  chosen via 5-fold cross-validation among linear, tree, random forest, and XGBoost (`best'). 

Figure~\ref{fig:syn.ylin} compares the out-of-sample $R^2$ of these methods in settings where $Y$ is generated according to a linear model and data is missing completely at random (resp. censored) on the top (resp. bottom) panel. 
Since $Y$ is generated as a linear function of $\bm{X}$, we also compare with the optimal $\bm{\mu}$-impute-then-linearly-regress model from Section~\ref{ssec:adaptive.imputation}, which we expect to be particularly accurate in this setting. {\blue We make the following observations:}
\begin{itemize}
    \item The performance of our joint impute-then-regress heuristic is comparable with that of optimal $\bm{\mu}$-impute-then-linearly-regress, suggesting that our heuristic is effective is jointly optimizing for the imputation and the regression model. 
In addition, our heuristic is versatile in that it can be used with any predictive model, including non-linear ones.

\item While competitive in MAR settings, mean-impute-then-regress provides suboptimal predictive power whenever the data is censored. In particular, adaptive linear regression and joint impute-then-regress methods achieve a significantly higher out-of-sample accuracy, of up to 10\%. These results highlight the key deficiency of naive impute-then-regress strategies: they hide the missingness indicator to the downstream predictive model, hence losing potentially predictive information. On the contrary, optimal $\bm{\mu}$-impute-then-linearly-regress and our impute-then-regress heuristic are the best performing methods, in both MAR and censored settings. 

\item On these instances where $Y$ is a linear function of the features, we observe that RF with MIA encoding performs strictly worse than all other methods in the MCAR setting. When data is censored, however, it provides a moderate improvement over mean impute-then-regress, but remains dominated by both our methods, optimal $\bm{\mu}$-impute-then-regress and joint impute-then-regress. 
\end{itemize}
\begin{figure}
    \begin{subfigure}[t]{\columnwidth}
        \centering
        \includegraphics[width=.65\columnwidth]{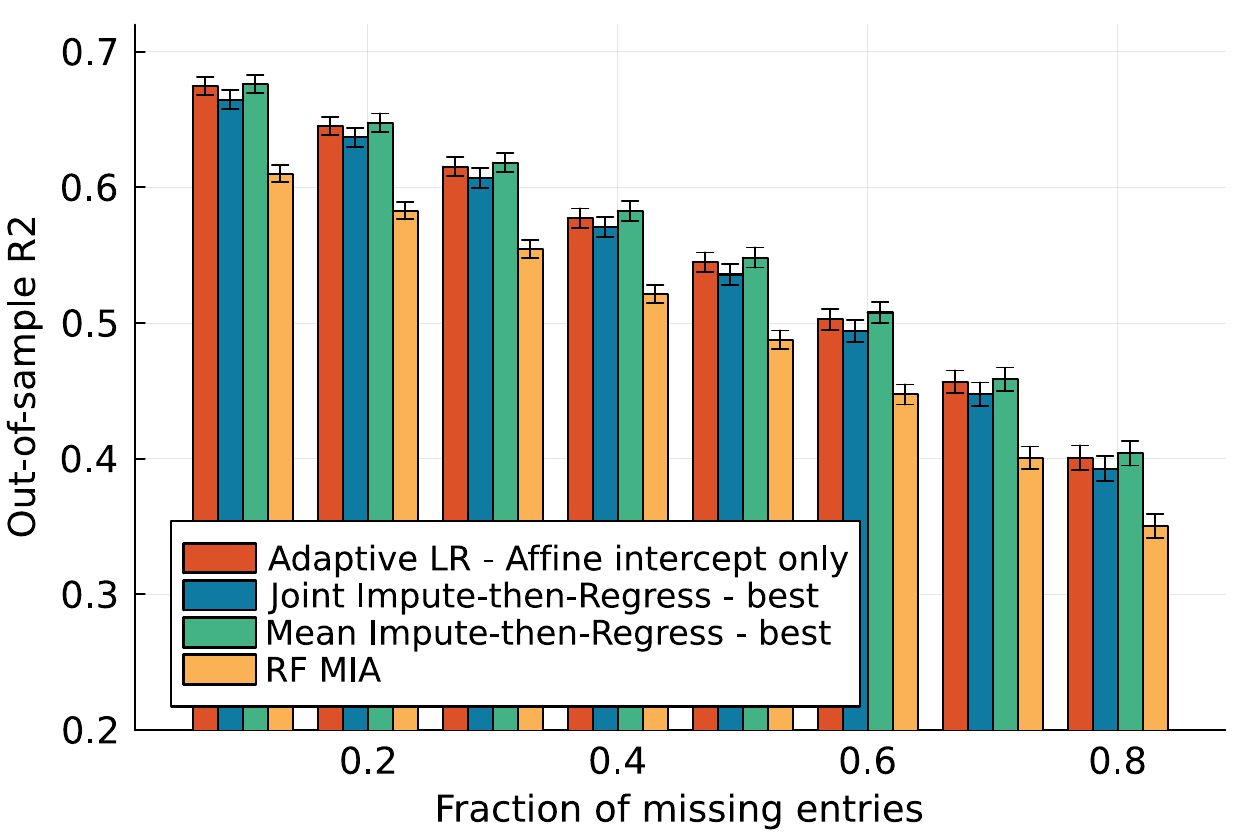}
        \caption{MCAR}
    \end{subfigure} %
    \begin{subfigure}[t]{\columnwidth}
        \centering
        \includegraphics[width=.65\columnwidth]{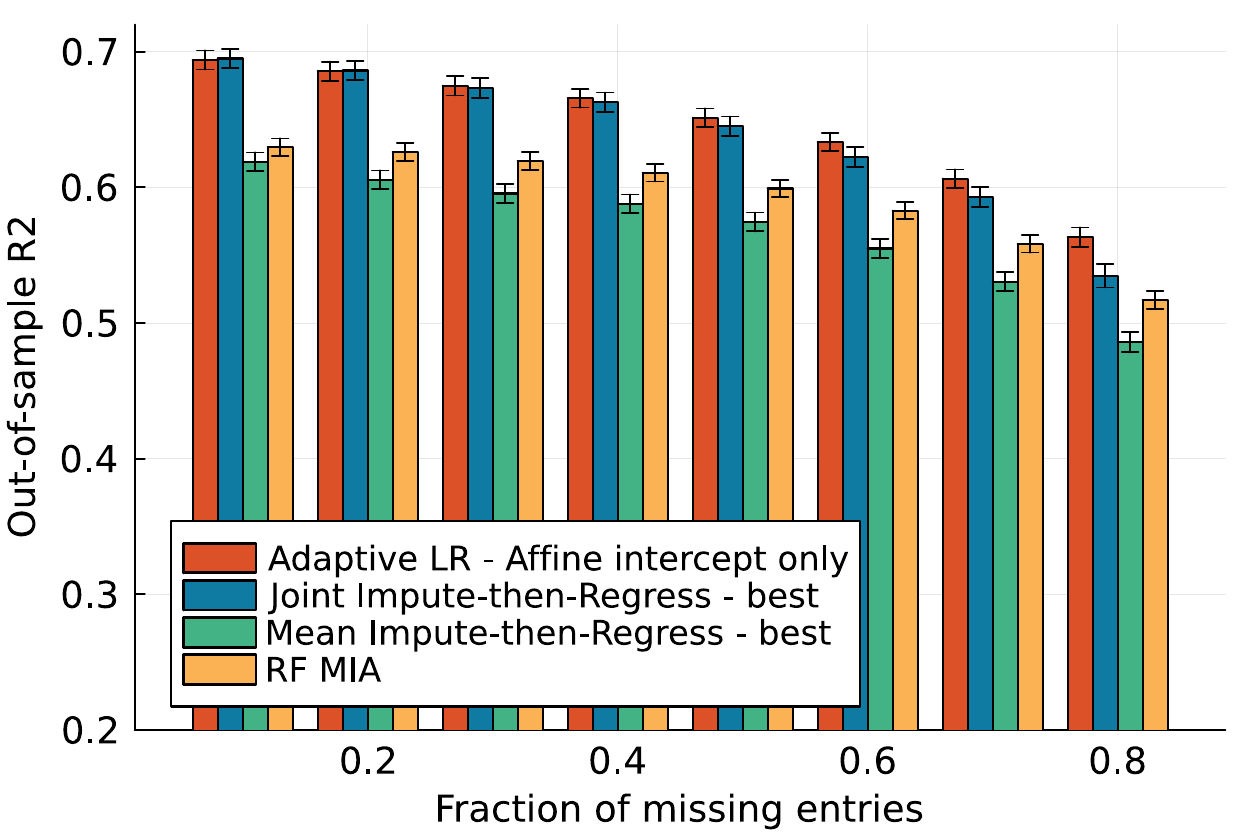}
        \caption{Censoring}
    \end{subfigure} %
    \caption{Out-of-sample $R^2$ of mean-impute-then-regress, joint impute-then-regress, and optimal $\bm{\mu}$-impute-then-regress (synthetic data, linear signal).}
    \label{fig:syn.ylin}
\end{figure}

Our observations are largely confirmed on instances where $Y$ is generated according to a neural network model {\blue (see Figure~\ref{fig:syn.ynn} in Appendix \ref{sec:add.num.syn})}. 
Fixing the fraction of missing entries to 0.3 but stratifying the results by sample size $n$, we can also confirm that our conclusions are consistent across all values of $n$, ranging from $n=40$ to $n=1,000$ (Figure \ref{fig:syn.n.ylin}).
\begin{figure}
    \begin{subfigure}[t]{\columnwidth}
        \centering
        \includegraphics[width=.65\columnwidth]{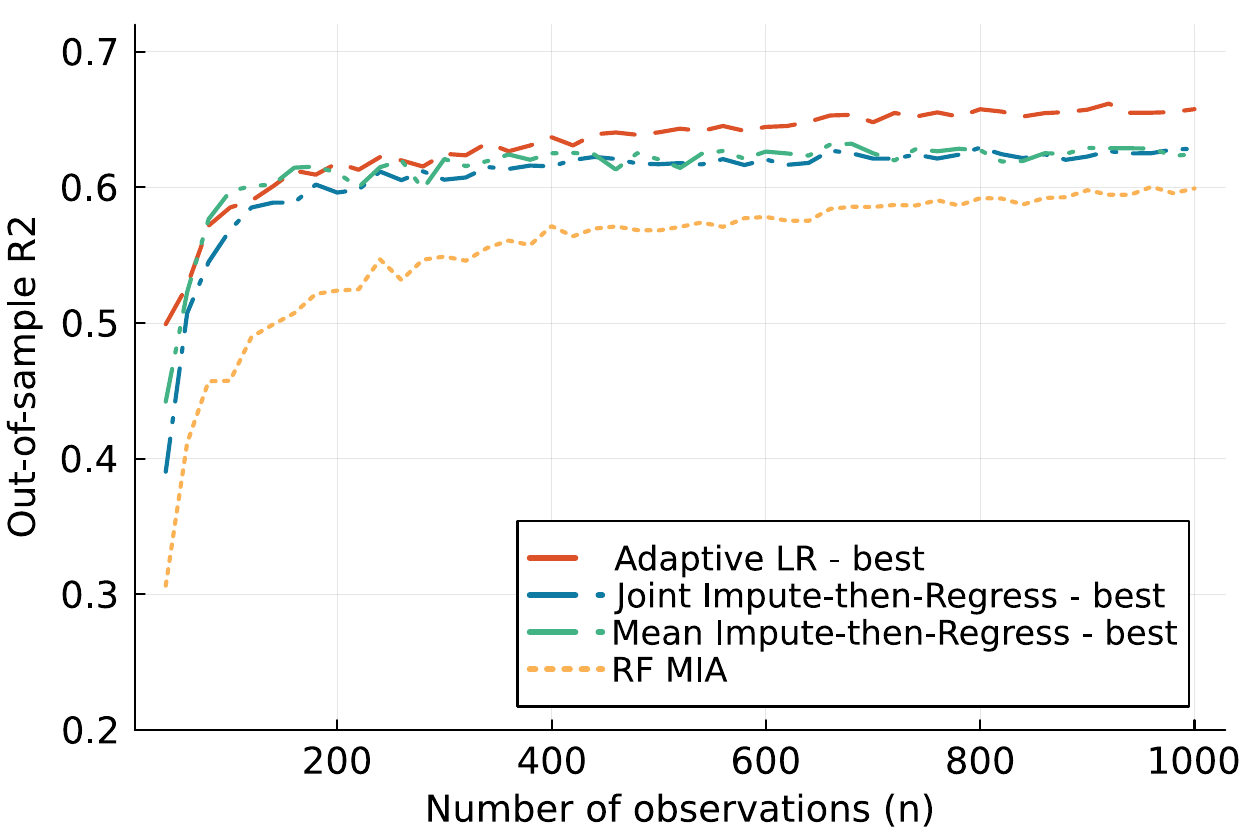}
        \caption{MCAR}
    \end{subfigure} %
    \begin{subfigure}[t]{\columnwidth}
        \centering
        \includegraphics[width=.65\columnwidth]{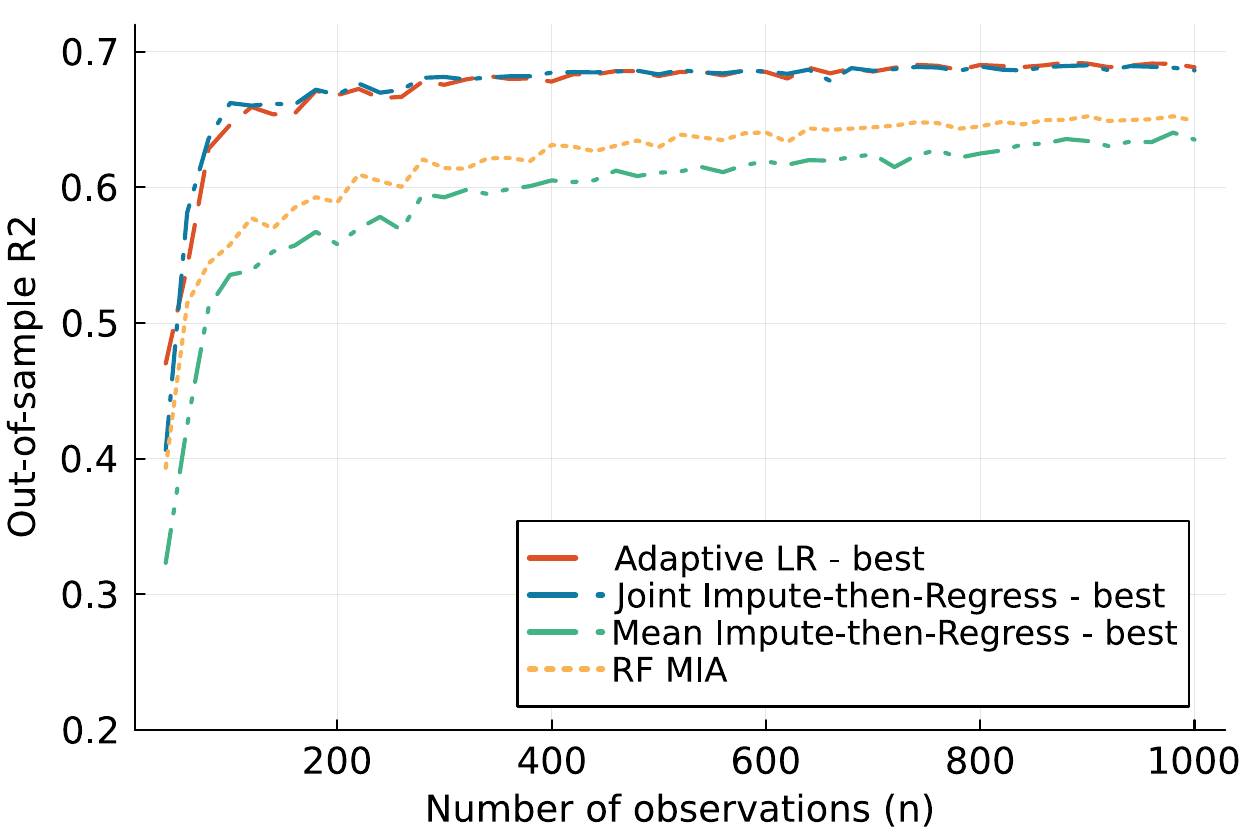}
        \caption{Censoring}
    \end{subfigure} %
    \caption{\blue Out-of-sample $R^2$ of mean-impute-then-regress, joint impute-then-regress, and optimal $\bm{\mu}$-impute-then-regress (synthetic data, linear signal), as the number of observations $n$ increases.}
    \label{fig:syn.n.ylin}
\end{figure}

\FloatBarrier
To summarize and complement these results, Table~\ref{tab:syn.res.table} reports the performance of all the methods proposed in this paper with several benchmarks, in these four synthetic-data settings. Namely, we report the performance of the best adaptive linear regression model (Section~\ref{sec:adaptive}), chosen among static with affine intercept, affine, or finite models via 5-fold cross validation on the training set (`Adaptive LR - best') and of our joint impute-then-regress (with the downstream predictor chosen among linear, tree, random forest, and XGBoost via 5-fold cross-validation on the training set as well). {\blue For adaptive LR, we report the performance of each adaptivity level separately in Table \ref{tab:syn.res.table.alr}.} 
In terms of benchmark, we compare with two impute-then-regress strategies, one with mean imputation and another one with a {\blue more} sophisticated imputation rule called \texttt{mice} \citep{buuren2010mice}. Regarding methods that can be trained on missing data directly, we compare with CART and random forest (RF) models with MIA encoding, 
and with XGBoost.
We still observe that sequential impute-then-regress strategies are mostly competitive when the data is MCAR but their relative performance deteriorates when data is not MAR. On the contrary, methods that can be trained on missing data directly tend to under-perform when MCAR and become more relevant on censored data. In contrast, our adaptive linear regression and joint impute-then-regress models are among the best performing methods irrespective of the missing data mechanism, and dominate all the benchmarks when data is censored. We also make three striking observations: (i) when the data is not missing at random,  the predictive power (as captured by $R^2$) can be higher than in the MCAR case; (ii) our adaptive linear regression models, although linear, perform very well in our experiments, even when the true signal is generated by a neural network; (iii) mean impute-then-regress constitutes a surprisingly strong benchmark, despite the simplicity of the imputation rules, thanks to the (potential) complexity of the downstream prediction rule \citep{bertsimas2024simple}.

\begin{table}
    \centering \footnotesize
    \begin{tabular}{l|cc|cc}
        Method & \multicolumn{2}{c}{MCAR} &  \multicolumn{2}{c}{Censoring} \\
        & Linear & NN & Linear & NN \\
\midrule
Adaptive LR - best & \bf 0.558 (0.003) & 0.354 (0.004) & \bf 0.641 (0.003) & \bf 0.524 (0.027)  \\ 
Joint Impute-then-Regress - best & 0.537 (0.003) & 0.389 (0.007) & 0.631 (0.003) & 0.514 (0.004)  \\ 
\midrule
Mean Impute-then-Regress - best & 0.533 (0.014) & 0.397 (0.003) & 0.562 (0.003) & 0.469 (0.003)  \\ 
mice Impute-then-Regress - best & 0.556 (0.004) & \bf 0.424 (0.004) & 0.457 (0.004) & 0.340 (0.004)  \\ \midrule
CART MIA & 0.321 (0.003) & 0.222 (0.003) & 0.451 (0.003) & 0.359 (0.003)   \\ 
RF MIA & 0.488 (0.003) & 0.397 (0.003) & 0.587 (0.002) & 0.515 (0.003)   \\ 
XGBoost & 0.474 (0.003) & 0.368 (0.003) & 0.580 (0.003) & 0.495 (0.003)   \\ 

    \end{tabular}
    \caption{Average out-of-sample $R^2$ (and standard error) for each method on synthetic datasets, where $Y$ is generated according to a linear or a neural network model, and the data is either missing completely at random or censored. Results averaged over $50$ training sizes, $8$ missingness levels, and $10$ random training/test splits. }
    \label{tab:syn.res.table}
\end{table}

\FloatBarrier
\subsection{Experiments on real data} \label{sec:realdata}
We now evaluate the performance of our methods on real-world data. 
Our objective is twofold: to confirm our findings on synthetic data from the previous section, and to appreciate the differences between the fully synthetic setting and real data.
As described in Section~\ref{ssec:numerical-setup}, our experimental benchmark comprises both synthetic (linear and neural network) signals and real signals. For synthetic signals, we control the dependency between $Y$ and $(\bm{X},\bm{M})$ so that features can be missing at random (MAR), not missing at random (NMAR), or adversarially missing (AM), as explained in Appendix~\ref{sssec:data.realx.syny}. 

Figure~\ref{fig:validation.ourmethods} displays the average out-of-sample performance for our adaptive linear regression models (top panel) and our joint impute-then-regress heuristic (bottom panel). For adaptive linear regression (Figure \ref{fig:validation.alr}), we observe that the static with affine intercept and affine variants achieve the best and comparable performance, while finite adaptive is often the worst performing method.
{\blue However, for tractability considerations (see Figure \ref{fig:validation.alr.time}), we would recommend to prioritize the affine intercept model over the fully affine one.}
By cross-validating the degree of adaptivity, the `best' variant successfully achieves the best performance overall. 
For the joint impute-then-regress heuristic (Figure \ref{fig:validation.jitr}), we surprisingly observe very strong performance from using a linear downstream predictive model, especially on the instances with synthetic signals, and even when the true signal is non-linear. Using random forest or XGBoost as regressors achieve the second best accuracy on the synthetic signal instances, while being the most accurate on the real signal ones. {\blue Training XGBoost, however, is in general faster than random forest (see Figure \ref{fig:validation.jitr.time}). As a result, we propose to cross-validate the downstream predictor between the linear and XGBoost model only, to avoid a computationally prohibitive search (we implement this cross-validated predictor in the `best' variant reported in Figure \ref{fig:validation.jitr}).} Overall, our proposal cross-validating the downstream predictor is effective in achieving the best performance. Finally, we should note that using a simple tree model as the downstream predictor is the worst performing model, by a significant margin, which suggests that our heuristic procedure might be more favorable to models whose output depends more continuously on the input variables.

\begin{figure}
    \begin{subfigure}[t]{\columnwidth}
        \centering
        \includegraphics[width=.65\columnwidth]{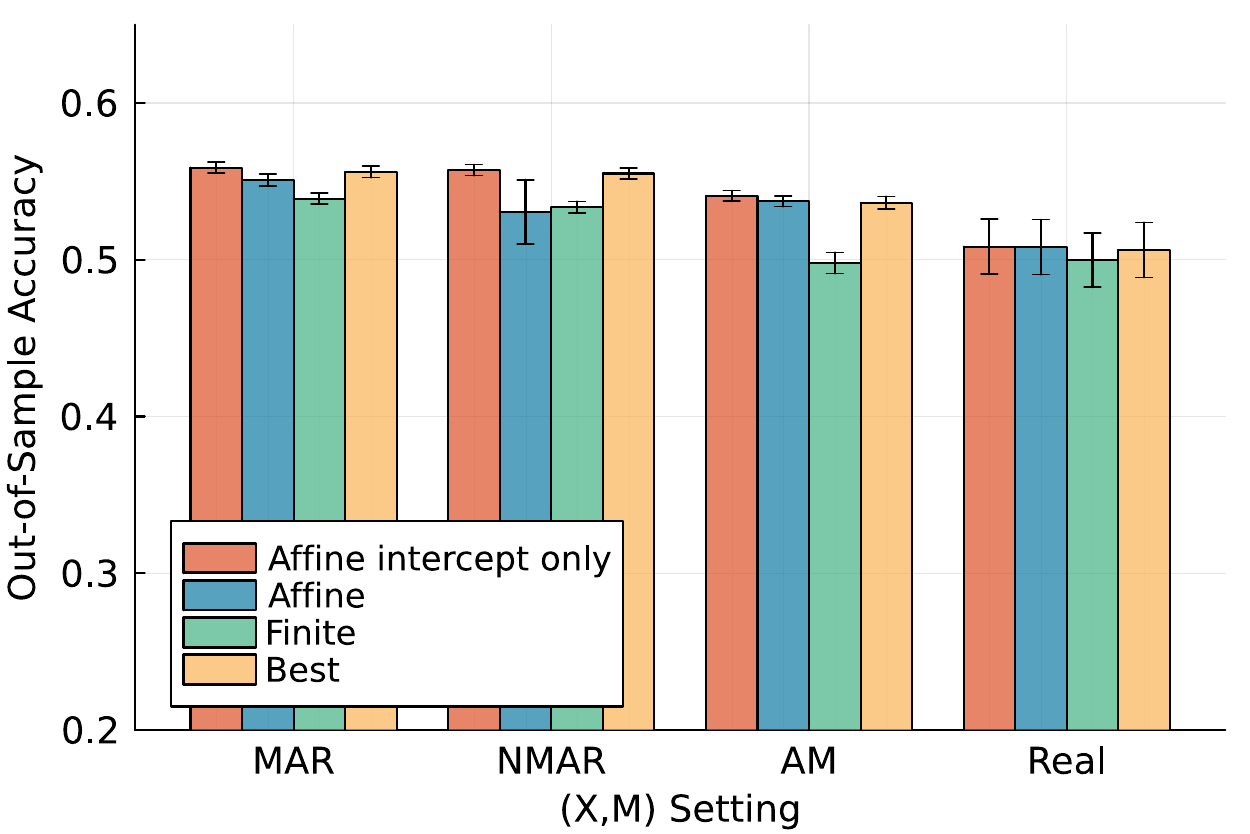}
        \caption{Adaptive Linear Regression} \label{fig:validation.alr}
    \end{subfigure} %
    \begin{subfigure}[t]{\columnwidth}
        \centering
        \includegraphics[width=.65\columnwidth]{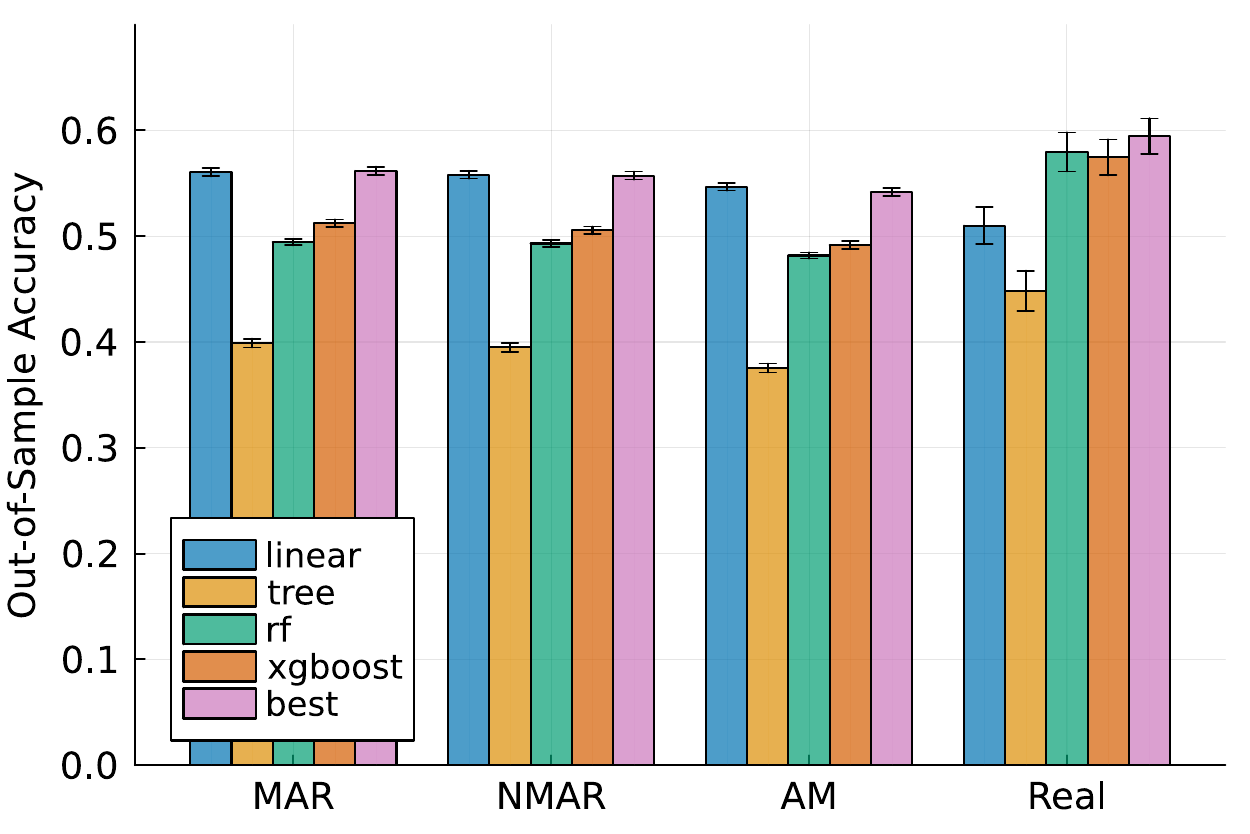}
        \caption{Joint Impute-then-Regress}\label{fig:validation.jitr}
    \end{subfigure} %
    \caption{Performance of adaptive linear regression (a) and joint impute-then-regress models (b) on real-world design matrix $\bm{X}$.}
    \label{fig:validation.ourmethods}
\end{figure}

We now compare the accuracy of the adaptive linear regression {\blue (affine intercept)} and joint impute-then-regress model {\blue (best between linear and XGBoost)} with mean-impute-then-regress and \texttt{mice}-impute-then-regress models (with the downstream predictive model chosen via cross-validation), as well as random forest and XGBoost models trained on missing data directly. We also compare to a ``complete feature'' regression model (i.e., using only features that are never missing) and, for synthetic signals, an ``oracle'' model, which has access to the fully observed dataset $\bm{x}_{full}$ and $\bm{m}$. As displayed in Figure~\ref{fig:realx.comparison}, we first observe that the complete feature regression model performs much worse than all other methods, emphasizing the danger of systematically discarding missing features in practice. Note that this conclusion holds despite the fact that features are correlated. Second, while competitive under MAR, the performance of mean impute-then-regress deteriorates as missingness becomes adversarially missing (AM), compared with both the oracle and our adaptive/joint impute-then-regress strategies. We should acknowlege, though, that the deterioration is less acute than on synthetic data. Again, these results highlight that applying imputation and regression sequentially cannot appropriately leverage the predictive power of missingness itself. Nonetheless, we should mention that on real signals $Y$, mean impute-then-regress is the best performing method, ex-aequo with joint impute-then-regress, which could be explained by the fact that the real-world data is closer to MAR than AM. 
\begin{figure}
    \centering
    \includegraphics[width=\columnwidth]{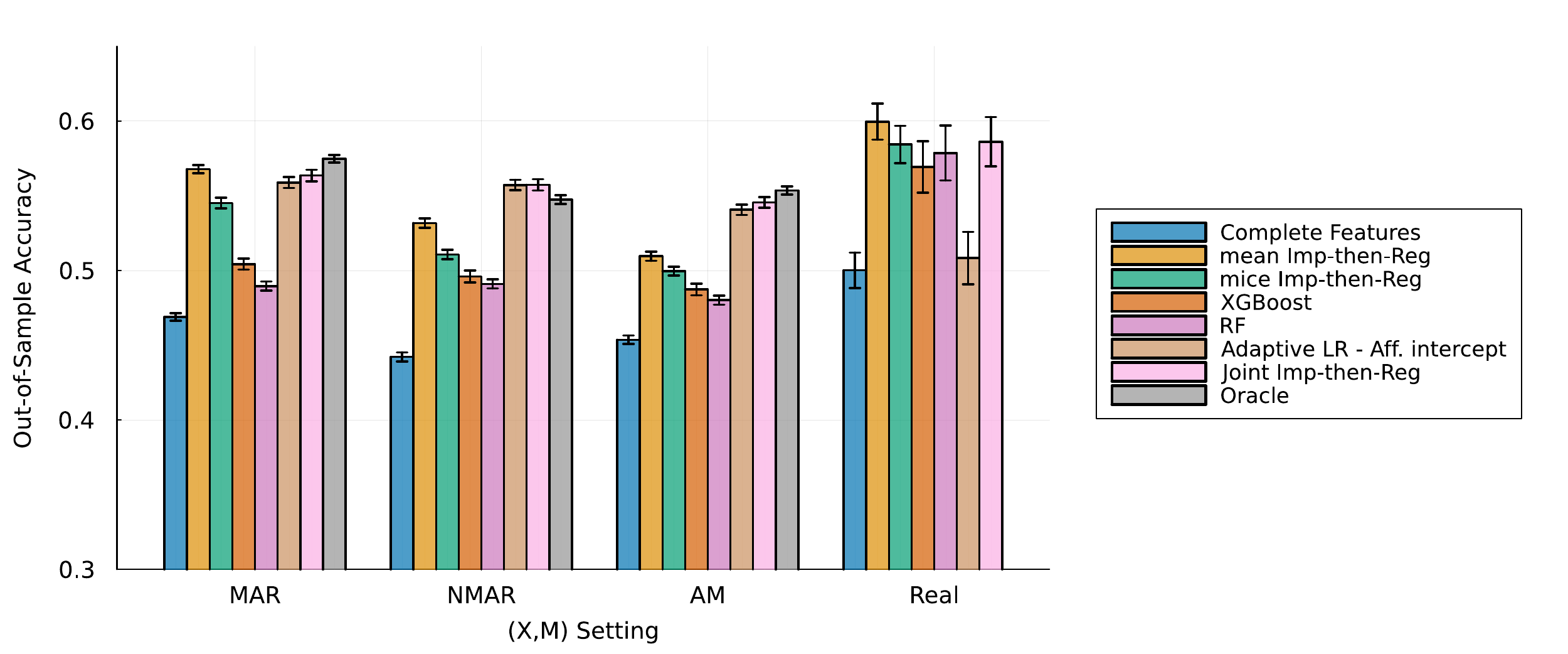}
    \caption{Comparison of adaptive linear regression and joint impute-then-regress methods vs. mean-impute-then-regress on real-world design matrix $\bm{X}$. We also compare to the ``Oracle'' which has access to the fully observed data and the ``Complete Feature'' regression which only uses features which are never missing.}
    \label{fig:realx.comparison}
\end{figure}
Note that the reported accuracy for real signals {\blue in Figure \ref{fig:realx.comparison}} aggregates the $R^2$ for regression tasks and $2 (AUC - 0.5)$ for classification. Figure~\ref{fig:realx.realy} in Appendix~\ref{sec:add.num} reports the results for regression and classification separately. For synthetic signals, we further stratify the results by the fraction of missing features contributing to the signal in Figure~\ref{fig:adaptive.all}. 

{\blue We also report computational times in Figure \ref{fig:realx.comparison.time}. Overall, we observe that adaptive linear regression with affine intercept is as computationally expensive as as mean- or mice-impute-then-regress on datasets with synthetic signals, while joint impute-then-regress can require up to twice as much time. Surprisingly, the computational times for the real signals $Y$ are significantly different, suggesting again that real data may follow patterns and structure that are not appropriately captured in our data generation strategies. 
\begin{figure}
    \centering
    \includegraphics[width=\columnwidth]{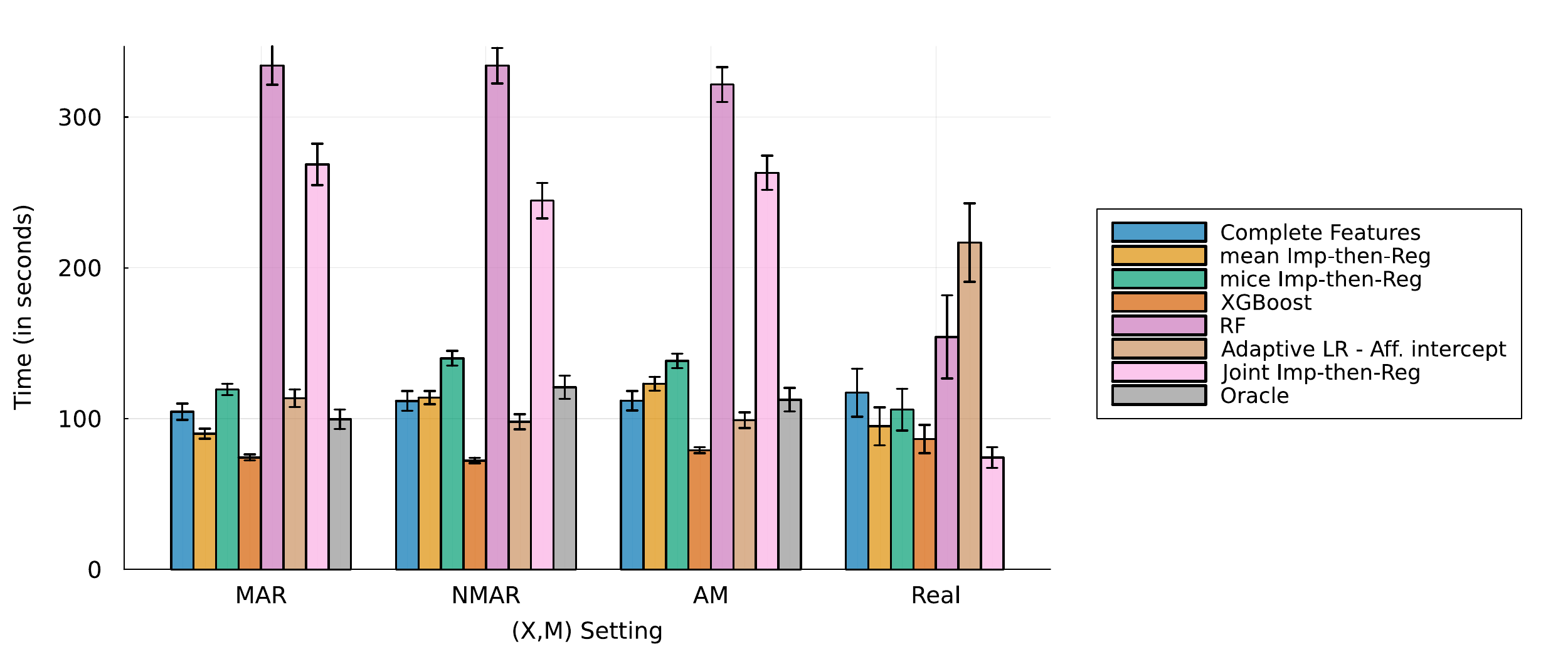}
    \caption{Comparison of adaptive linear regression and joint impute-then-regress methods vs. mean-impute-then-regress on real-world design matrix $\bm{X}$. We also compare to the ``Oracle'' which has access to the fully observed data and the ``Complete Feature'' regression which only uses features which are never missing.}
    \label{fig:realx.comparison.time}
\end{figure}
}

From Figure \ref{fig:realx.comparison}, we conclude that the benefit from our adaptive linear regression models or joint impute-then-regress heuristics is stronger as we deviate from the MAR assumption. To further illustrate this behavior, for each dataset, we count how often the adaptive regression method (resp. joint impute-then-regress heuristic) outperforms mean-impute-then-regress---the second-best performing method---over the 10 random training/test splits and report the distribution of `wins' in Figure~\ref{fig:winrate.adaptive} (resp. Figure \ref{fig:winrate.joint}). 
The trend is clear: as the missingness mechanisms departs further away from the MAR assumption, our adaptive and joint-impute-then-regress models improve more often over mean impute-then-regress. 

\begin{figure}
    \centering
    \begin{subfigure}[t]{.45\textwidth}
    \centering
    \includegraphics[width=\textwidth]{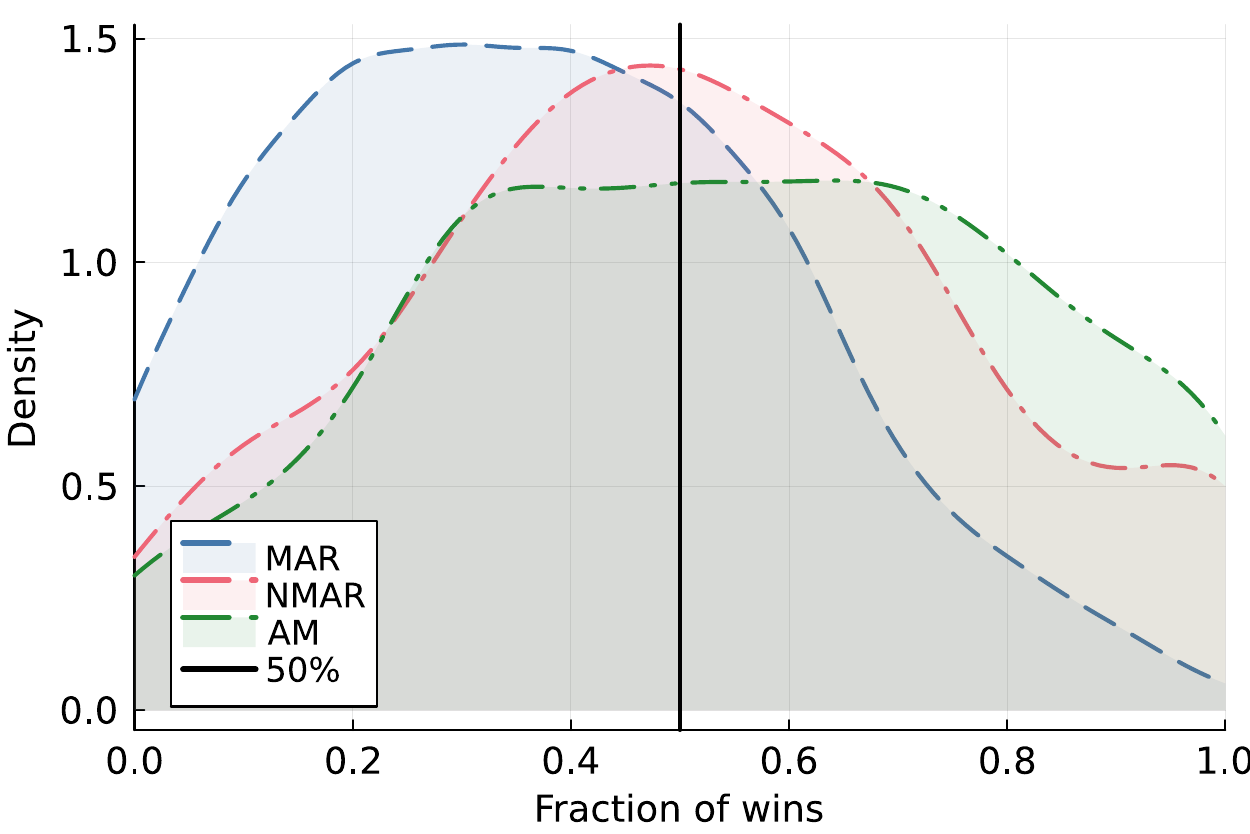}
    \caption{Adaptive linear regression}\label{fig:winrate.adaptive}
    \end{subfigure} %
    \begin{subfigure}[t]{.45\textwidth}
    \centering
    \includegraphics[width=\textwidth]{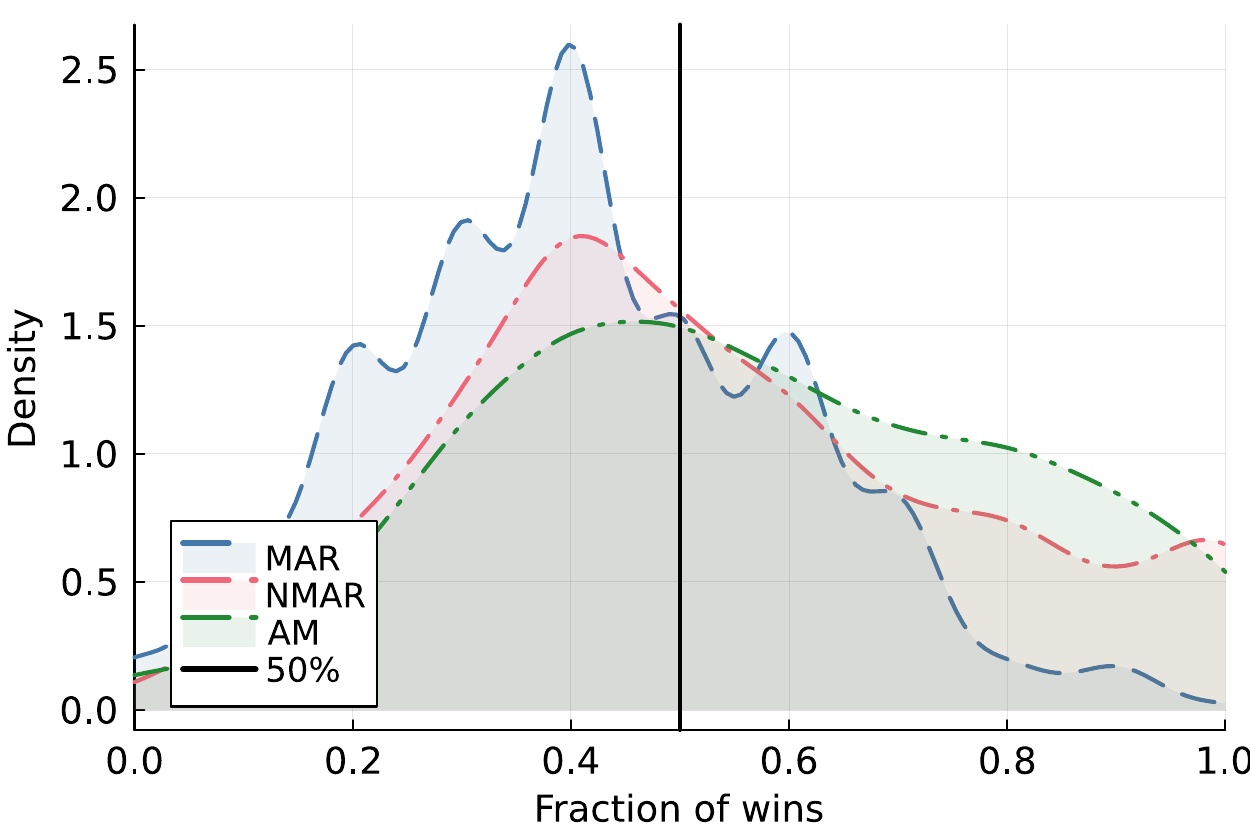}
    \caption{Joint impute-then-regress}\label{fig:winrate.joint}
    \end{subfigure}
    \caption{Frequency of ``wins'' from adaptive models vs. mean-impute-then-regress ones.} \label{fig:winrate}
 \end{figure}

\section{Conclusion}

The literature on missing data in inference is vast; the one on missing data in prediction is comparatively diminutive. Yet, prediction is a distinct and increasingly relevant statistical problem, which can be tackled with new methodologies. 
In this paper, we propose a framework inspired by adaptive optimization to jointly optimize for the imputation of missing data and the downstream predictive task. We demonstrate numerically that our algorithm is competitive with sequential impute-then-regress pipelines when data is missing at random and that it can provide superior predictive power when the true missingness pattern deviates from the MAR assumption. 
In short, our adaptive models have comparable or better performance to standard impute-then-regress methods, while being agnostic to the practically unverifiable missingness mechanism.

\backmatter
\bmhead{Supplementary information}

Proofs of key results and additional numerical results are available in the appendix.


\newpage
\FloatBarrier

\counterwithin{figure}{section}
\counterwithin{table}{section}

\begin{appendices}

\section{Proof of Theorem \ref{prop:finite}} \label{sec:proof.finite}
\begin{lemma}\label{lemma:taylor} For any $t \in \mathbb{Z}_+$, let $w^t_j(\bm{m})$ denote the order-$t$ Taylor expansion of $w^\star_j(\bm{m})$ at $\mathbb{E}[\bm{m}]$. Similarly, denote $f^t(\bm{x}, \bm{m}) = \sum_{j =1}^d w^t_j(\bm{m}) x_j (1-m_j)$. Then, 
\begin{align*}
\left| f^\star(\bm{x}, \bm{m}) - f^t(\bm{x}, \bm{m}) \right| \leq \dfrac{2^{t+1} L}{(t+1)!} \| \bm{m} - \mathbb{E}[\bm{M}] \|^{t+1}.
\end{align*}
\end{lemma}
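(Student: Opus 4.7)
The plan is to prove the bound term-by-term: apply multi-variate Taylor expansion to each coefficient function $w^\star_j$ at the expansion point $\mathbb{E}[\bm{M}]$, then control the $(t+1)$-th partial derivatives by exploiting the multilinear structure of $w^\star_j$.

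As a first step, I would observe that $w^\star_j$ is a priori only defined on $\{0,1\}^d$ via the coefficients $w^\star_{j,\bm{m}}$ in Assumption~\ref{ass:finite.generative}, and extend it to the unique multilinear polynomial on $\mathbb{R}^d$ that interpolates its Boolean values; this extension is the object whose Taylor expansion at $\mathbb{E}[\bm{M}]$ defines $w^t_j$. Since the factor $(1-m_j)$ in $f^\star$ kills any dependence on $m_j$ itself, I would also assume without loss of generality that $w^\star_j$ is independent of $m_j$. Under this reduction, testing $\|f^\star\|_\infty\leq L$ against the choice $\bm{x}=\bm{e}_j$ with $m_j=0$ yields the uniform bound $|w^\star_j(\bm{m})|\leq L$ on the Boolean cube.

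The second step is to apply Taylor's theorem in Lagrange form,
\begin{align*}
w^\star_j(\bm{m}) - w^t_j(\bm{m}) = \sum_{|\alpha|=t+1} \frac{\partial^\alpha w^\star_j(\xi_j)}{\alpha!}\,(\bm{m} - \mathbb{E}[\bm{M}])^\alpha,
\end{align*}
for some $\xi_j$ on the segment joining $\mathbb{E}[\bm{M}]$ to $\bm{m}$, which lies in $[0,1]^d$. The derivative bound $|\partial^\alpha w^\star_j(\xi_j)|\leq 2^{|\alpha|}L$ then follows by iterating the identity $\partial_i p(\bm{m}) = p(\bm{m}_{-i},1)-p(\bm{m}_{-i},0)$, valid for any multilinear polynomial $p$, together with the fact that multilinear polynomials attain their extrema on $[0,1]^d$ at corners of the cube. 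Plugging this into the Lagrange remainder and collapsing the multi-index sum via the multinomial theorem then yields the per-coordinate bound, which transfers to $|f^\star-f^t|$ through the decomposition $f^\star-f^t = \sum_j (w^\star_j-w^t_j)(\bm{m})\,x_j(1-m_j)$.

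The main obstacle I anticipate is extracting the uniform bound $|w^\star_j|\leq L$ from the aggregate hypothesis $\|f^\star\|_\infty\leq L$, since the latter only directly controls the weighted sum $\sum_j w^\star_j(\bm{m})(1-m_j)x_j$. The reduction ``$w^\star_j$ independent of $m_j$'' together with a judicious choice of $\bm{x}$ makes this tractable. Once the derivative bound is in place, the $2^{t+1}$ factor arises naturally as the cost of iteratively expanding multilinear differences, and the remaining combinatorics (multinomial identity, summation over $j$) are routine.
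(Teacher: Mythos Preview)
Your overall strategy---Taylor expansion in $\bm{m}$ with the $(t+1)$-th derivatives bounded via the multilinear structure---matches the paper's, and your iterated-difference identity $\partial_i p = p(\cdot,1)-p(\cdot,0)$ is an elegant rephrasing of the paper's explicit derivative computation. But there is a genuine gap in the final ``transfer'' step.

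After you obtain the per-coordinate bound $|w^\star_j(\bm{m}) - w^t_j(\bm{m})| \leq \tfrac{2^{t+1}L}{(t+1)!}\|\bm{m}-\mathbb{E}[\bm{M}]\|^{t+1}$, you claim it transfers to $|f^\star - f^t|$ through the decomposition $f^\star - f^t = \sum_j (w^\star_j - w^t_j)(\bm{m})\,x_j(1-m_j)$. Applying the triangle inequality here gives
\[
|f^\star - f^t| \;\leq\; \dfrac{2^{t+1}L}{(t+1)!}\,\|\bm{m}-\mathbb{E}[\bm{M}]\|^{t+1}\,\sum_{j} |x_j|\,(1-m_j),
\]
with an extra factor $\sum_j |x_j|(1-m_j)$ that is not controlled by any assumption and does not appear in the claimed bound. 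Your extraction of $|w^\star_j|\leq L$ via the test point $\bm{x}=\bm{e}_j$ gives no handle on general $\bm{x}$. In effect, by decoupling into coordinates you discard exactly the coupling between the $x_j$'s and the $w^\star_{j}$'s that the aggregate hypothesis $\|f^\star\|_\infty\leq L$ encodes---the obstacle you anticipated, but biting at the recombination step rather than the extraction step.

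The paper avoids this by applying Taylor directly to the aggregate $g(\bm{m}') := \sum_j w^\star_j(\bm{m}')\,\tilde{x}_j$ with $\tilde{x}_j = x_j(1-m_j)$ held fixed. Your same iterated-difference argument then yields $|\partial^\alpha g| \leq 2^{|\alpha|}\sup_{\{0,1\}^d}|g|$, and each cube value of $g$ is bounded by $L$ directly via Assumption~\ref{ass:finite.generative}, with no recombination needed. Your proof becomes correct with this single change: apply the derivative bound and Taylor remainder to $g$, not to each $w^\star_j$ separately.
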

\begin{proof}[Proof of Lemma \ref{lemma:taylor}]
Let us consider the change of variable $\Tilde{x}_j = x_j (1-m_j)$ and the function (with a slight abuse of notations) $f^\star(\Tilde{\bm{x}}, \bm{m}) = \sum_j w^\star_j(\bm{m}) \Tilde{x}_j$.
If $C$ is a uniform bound on the order-$(t+1)$ derivatives of $f^\star(\Tilde{\bm{x}}, \cdot)$, then Taylor's theorem in multiple variables \citep[][corollary 1]{folland2005higher} yields 
\begin{align*}
  \left| f^\star(\Tilde{\bm x}, \bm{m}) - f^t(\Tilde{\bm x}, \bm{m}) \right| \leq \dfrac{C}{(t+1)!} \| \bm{m} - \mathbb{E}[\bm{M}] \|^{t+1}.  
\end{align*}
Accordingly, if we prove that order-$t$ derivatives of $f^\star(\Tilde{\bm{x}}, \cdot)$ are bounded by $2^t L$, then the result will follow.

Consider a sequence of $t$ indices, $j_1,\dots,j_t$. Consider $j' \in \{1,\dots,d\}$. Recall that 
\begin{align*}
    w^\star_{j'}(\bm{M}) = \sum_{\bm{m} \in \{0,1\}^d} w^\star_{j',m} \prod_{j=1}^d (1 - M_j - m_j + 2 M_j m_j).
\end{align*}
In particular, $w^\star_{j'}(\bm{M})$ is linear in each coordinate $M_j$, all other coordinates being fixed. 
Hence, if the indices $j_1,\dots,j_t$ are not distinct,
\begin{align*}
    \dfrac{\partial^t {w}^\star_{j'}}{\partial M_{j_1}\dots \partial M_{j_t}} (\bm{M}) &= 0,
\end{align*}
and 
$$\dfrac{\partial^t f^\star}{\partial M_{j_1}\dots \partial M_{j_t}} (\Tilde{\bm{X}}, \bm{M}) = 0.$$
as well. On the other hand, if the indices $j_1,\dots,j_t$ are distinct, we have 
\begin{align*}
    \dfrac{\partial^t {w}^\star_{j'}}{\partial M_{j_1}\dots \partial M_{j_t}} (\bm{M}) &= \sum_{m \in \{0,1\}^d} {w}_{j',m}^\star \prod_{\tau =1}^t (2m_{j_\tau}-1) \prod_{i \neq j_1,\dots j_t} (1-M_i-m_i+2M_i m_i).
\end{align*}
Observe that the product on the right-hand side is non-zero only for missingness patterns $\bm{m}$ that satisfy $M_i = m_i$ for all $i \neq j_1,\dots,j_t$. So the sum over all missingness patterns $\bm{m}$ reduces to a sum over $2^t$ terms. Consequently, 
\begin{align*}
    & \dfrac{\partial^t f^\star}{\partial M_{j_1}\dots \partial M_{j_t}} (\Tilde{\bm{X}}, \bm{M}) \\
    & \quad\quad = \sum_{j'=1}^d \sum_{m \in \{0,1\}^d}  {w}_{j',m}^\star \Tilde{X}_{j'} \prod_{\tau =1}^t (2m_{j_\tau}-1) \prod_{i \neq j_1,\dots j_t} (1-M_i-m_i+2M_i m_i) \\
    & \quad\quad = \sum_{m \in \{0,1\}^d}  \left( \sum_{j'=1}^d  {w}_{j',m}^\star \Tilde{X}_{j'} \right) \prod_{\tau =1}^t (2m_{j_\tau}-1) \prod_{i \neq j_1,\dots j_t} (1-M_i-m_i+2M_i m_i).
\end{align*}
In the sum above, at most $2^t$ terms are non-zero. According to Assumption \ref{ass:finite.generative}, $\left| \sum_{j'=1}^d  {w}_{j',m}^\star \Tilde{X}_{j'} \right| \leq L$. Furthermore, $\prod_{\tau =1}^t (2m_{j_\tau}-1) \in \{-1,1\}$ so
\begin{align*}
    \left| \dfrac{\partial^t f^\star}{\partial M_{j_1}\dots \partial M_{j_t}} (\Tilde{\bm{X}}, \bm{M})  \right| \leq 2^t L.
\end{align*}
\end{proof}

\begin{lemma}\label{lemma:taylor.exp} For any $t \in \mathbb{Z}_+$, let $w^t_j(\bm{m})$ denote the order-$t$ Taylor expansion of $w^\star_j(\bm{m})$ at $\mathbb{E}[\bm{m}]$. Similarly, denote $f^t(\bm{x}, \bm{m}) = \sum_{j =1}^d w^t_j(\bm{m}) x_j (1-m_j)$. Then, 
\begin{align*}
\mathbb{E} \left[ \left| f^\star(\bm{X}, \bm{M}) - f^t(\bm{X}, \bm{M}) \right|^2 \right] = \mathcal{O} \left( \left( \dfrac{d}{(t+1)^3} \right)^{t+1} \right).
\end{align*}
\end{lemma}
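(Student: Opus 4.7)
The plan is to combine the pointwise Taylor-remainder bound from Lemma~\ref{lemma:taylor} with a moment estimate on the centered missingness vector and Stirling's approximation.

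First, I would square both sides of the inequality in Lemma~\ref{lemma:taylor}. Since its right-hand side depends only on $\bm{m}$ (not on $\bm{x}$), integrating over the joint distribution of $(\bm{X},\bm{M})$ reduces the claim to the bound
\begin{align*}
\mathbb{E}\!\left[\left|f^\star(\bm{X},\bm{M})-f^t(\bm{X},\bm{M})\right|^2\right] \leq \frac{4^{t+1}L^2}{((t+1)!)^2}\,\mathbb{E}\!\left[\|\bm{M}-\mathbb{E}[\bm{M}]\|^{2(t+1)}\right].
\end{align*}
So the problem becomes one of estimating a high moment of $\|\bm{M}-\mathbb{E}[\bm{M}]\|$ and comparing it to a factorial.

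Second, I would control the moment of the centered missingness vector. Since $M_j\in\{0,1\}$ and $\mathbb{E}[M_j]\in[0,1]$, each coordinate satisfies $(M_j-\mathbb{E}[M_j])^2\leq 1$ almost surely, so $\|\bm{M}-\mathbb{E}[\bm{M}]\|^2\leq d$ deterministically, and hence $\mathbb{E}[\|\bm{M}-\mathbb{E}[\bm{M}]\|^{2(t+1)}]\leq d^{t+1}$. This crude bound is where room for improvement lies, since typically $\|\bm{M}-\mathbb{E}[\bm{M}]\|^2$ concentrates around $\sum_j p_j(1-p_j)\leq d/4$.

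Third, I would apply Stirling's approximation $(t+1)!\geq\sqrt{2\pi(t+1)}\,((t+1)/e)^{t+1}$ to turn the factorial into a super-exponential quantity in the denominator, yielding
\begin{align*}
\mathbb{E}\!\left[\left|f^\star(\bm{X},\bm{M})-f^t(\bm{X},\bm{M})\right|^2\right] \leq \frac{(4e^2L^2 d)^{t+1}}{2\pi\,(t+1)^{2(t+1)+1}},
\end{align*}
which after absorbing universal constants is of the claimed super-exponential-in-$t$ form.

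The main obstacle is matching the exact denominator exponent $3(t+1)$ in the advertised bound: the direct chaining above only produces $(t+1)^{2(t+1)+1}$, so tightening to $(t+1)^{3(t+1)}$ will require a sharper moment estimate on $\|\bm{M}-\mathbb{E}[\bm{M}]\|^{2(t+1)}$. I would expect to get the extra factor of $(t+1)^{t+1}$ in the denominator by exploiting the fact that $\mathbb{E}\|\bm{M}-\mathbb{E}[\bm{M}]\|^2\leq d/4$ and the (near-)independence of the Bernoulli coordinates via a Rosenthal-type moment inequality, rather than from the worst-case bound $d^{t+1}$ used above.
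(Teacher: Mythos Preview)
Your proposal follows exactly the paper's route: square Lemma~\ref{lemma:taylor}, take expectations, bound the moment of $\|\bm{M}-\mathbb{E}[\bm{M}]\|$ deterministically, and finish with Stirling. The paper in fact uses the pointwise claim $\|\bm{M}-\mathbb{E}[\bm{M}]\|^2\le d/4$, which is not correct in general (take $p_j$ close to $0$ and $M_j=1$); your bound $\le d$ is the honest one. This discrepancy is immaterial for the big-$\mathcal{O}$ conclusion, since the extra $4^{t+1}$ just cancels the prefactor from Lemma~\ref{lemma:taylor}.

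On your ``main obstacle'': you are right that the chain only yields $d^{t+1}L^2/((t+1)!)^2$, and Stirling turns this into something of order $(de^2)^{t+1}/(t+1)^{2(t+1)+1}$, not $d^{t+1}/(t+1)^{3(t+1)}$. The paper's proof does \emph{not} do anything sharper here---it simply states ``Using Stirling's approximation \ldots\ yields the result'' after reaching the same bound. So the exponent $3$ in the lemma's statement is not actually delivered by the paper's own argument; only $2$ is. Your instinct to look for a Rosenthal-type refinement is not needed to reproduce the paper's proof, and in any case would require an independence assumption on the coordinates of $\bm{M}$ that is nowhere imposed. In short: your write-up is faithful to (and more careful than) the paper's, and the mismatch you flagged is a defect of the stated lemma rather than of your argument.
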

\begin{proof}[Proof of Lemma \ref{lemma:taylor.exp}]
By Lemma \ref{lemma:taylor}, we have 
\begin{align*}
\mathbb{E} \left[ \left| f^\star(\bm{X}, \bm{M}) - f^t(\bm{X}, \bm{M}) \right|^2 \right] \leq \dfrac{4^{t+1} L^2}{\left(  (t+1)! \right)^2} \mathbb{E}\left[ \| \bm{M} - \mathbb{E}[\bm{M}] \|^{2(t+1)} \right].
\end{align*}
Since $\| \bm{M} - \mathbb{E}[\bm{M}] \|^2 = \sum_{j=1}^d (M_j - \mathbb{E}[M_j])^2 \leq d/4$, $\mathbb{E}\left[ \| \bm{M} - \mathbb{E}[\bm{M}] \|^{2(t+1)} \right] \leq (d/4)^{t+1}$ and we obtain
\begin{align*}
\mathbb{E} \left[ \left| f^\star(\bm{X}, \bm{M}) - f^t(\bm{X}, \bm{M}) \right|^2 \right] \leq \dfrac{d^{t+1} L^2}{\left(  (t+1)! \right)^2}.
\end{align*}
Using Stirling's approximation $(t+1)! \sim \sqrt{2 \pi (t+1)} (\tfrac{t+1}{e})^{t+1}$ yields the result.
\end{proof}

\begin{proof}[Proof of Theorem \ref{prop:finite}]
By decomposing $Y$, 
we obtain 
\begin{align*}
    \mathbb{E}\left[ \left(Y - T_L \hat{f}_n(\bm{X},\bm{M})\right)^2 \right] 
    = \mathbb{E}\left[ \left(f^\star(\bm{X},\bm{M}) - T_L \hat{f}_n(\bm{X},\bm{M})\right)^2 \right] + \mathbb{E}\left[ \varepsilon^2 \right]. 
\end{align*}    
According to \citet[][theorem 11.3]{gyorfi2002distribution}, the first term on the right-hand side can be bounded by 
\begin{align*}
    8\, \inf_{f \in \mathcal{F}} \mathbb{E}\left[ \left(f^\star(\bm{X},\bm{M}) - f(\bm{X},\bm{M})\right)^2 \right] + c \max\{\sigma^2, L\} \dfrac{1+\log n}{n}\,  p(\mathcal{F}),
\end{align*}    
while the second term is bounded by $\sigma^2$. 

To bound $\inf_{f \in \mathcal{F}} \mathbb{E}\left[ \left(f^\star(\bm{X},\bm{M}) - f(\bm{X},\bm{M})\right)^2 \right]$, it is sufficient to bound $\mathbb{E}\left[ \left( f^\star(\bm{X},\bm{M}) - f(\bm{X},\bm{M})\right)^2 \right]$ for some $f \in \mathcal{F}$. In particular, defining $f(\bm{x}, \bm{m})$ as the order-$t$ Taylor expansion (in $\bm{m}$) of $f^\star(\bm{x}, \bm{m})$ at $\bm{m} = \mathbb{E}[\bm{M}]$ yields (Lemma \ref{lemma:taylor.exp} in Appendix \ref{sec:proof.finite})
\begin{align*}
\mathbb{E} \left[ \left| f^\star(\bm{X}, \bm{M}) - f(\bm{X}, \bm{M}) \right|^2 \right] = \mathcal{O} \left( \dfrac{d^{t+1}}{ (t+1)^{3(t+1)}} \right). 
\end{align*}
\end{proof}

\section{Data Description and Evaluation Methodology} \label{sec:data}
In this section, we describe the datasets we used in our numerical experiments, as well as various implementation details. In line with other works in the literature, we conduct some of our experiments on synthetic data, where we have full control over the design matrix $\bm{X}$, the missingness pattern $\bm{M}$, and the signal $Y$. We also contrast the results obtained on these synthetic instances with real world instances from the UCI Machine Learning Repository and the RDatasets Repository\footnote{\url{https://archive.ics.uci.edu} and \url{https://github.com/vincentarelbundock/Rdatasets}}. 

Note that all experiments were performed on a Intel Xeon E5—2690 v4 2.6GHz CPU core using 8 GB RAM. 

\subsection{Synthetic data generation} \label{ssec:data.synthetic}
As in \citet[][section 7]{le2020linear}, we generate a multivariate vector $\bm{X}$ from a multivariate Gaussian with mean $0$ and covariance matrix $\bm{\Sigma} := \bm{B} \bm{B}^\top + \epsilon \mathbb{I}$ where $\bm{B} \in \mathbb{R}^{d \times r}$ with i.i.d. standard Gaussian entries and $\epsilon > 0$ is chosen small enough so that $\bm{\Sigma} \succ \bm{0}$. We fix $d = 10$ and $r=5$ in our experiments. We generate a $n$ observations, $n \in \{40, 60, \dots, 1000\}$, for the training data and $5,000$ observations for the test data.

We then generate signals $Y = f(\bm{X}) + \varepsilon$, where $f(\cdot)$ is a predefined function and $\varepsilon$ is a centered normal random noise whose variance is calibrated to achieve a target signal-to-noise ratio $SNR$. We choose $SNR = 2$ in our experiments. We use functions $f(\cdot)$ of the following forms:
\begin{itemize}
    \item {\bf Linear model: } $f(\bm{x}) = b + \bm{w}^\top \bm{x}$ where $b \sim \mathcal{N}(0,1)$ and $w_j \sim \mathcal{U}([-1,1])$.
    \item {\bf Neural Network (NN) model: } $f(\bm{x})$ corresponds to the output function of a 2-layer neural network with 10 hidden nodes, ReLU activation functions, and random weights and intercept for each node. 
\end{itemize}
We compute $f(\bm{x})$ using a random subset of $k$ out of the $d$ features only, with $k=5$.

Finally, for a given fraction of missing entries $p$, we generate missing entries according to mechanisms
\begin{itemize}
    \item {\bf Missing Completely At Random (MCAR): } For each observation and for each feature $j \in \{1,\dots,d\}$, we sample $M_j \sim Bern(p)$ independently (for each feature and each observation).
    \item {\bf Not Missing At Random (NMAR) - Censoring: } We set $M_j = 1$ whenever the value of $X_j$ is above the $(1-p)$th percentile.
\end{itemize}
For the fraction of missing entries, we consider the different values $p \in \{0.1, 0.2, \dots, 0.8\}$. 

With this methodology, we generate a total of $49$ training sets, with $2$ categories of signal $Y$, $2$ missingness mechanisms, and $8$ proportion of missing entries, i.e., $1,568$ different instances. 

We measure the predictive power of a method in terms of average out-of-sample $R^2$. We use $R^2$, which is a scaled version of the mean squared error, to allow for a fair comparison and aggregation of the results across datasets and generative models.

\subsection{Real-world design matrix} \label{ssec:data.realx}
In addition to synthetic data, we also  assemble a corpus of 63 publicly available datasets with missing data, from the UCI Machine Learning Repository and the RDatasets Repository. Tables \ref{tab:datasets.catmissingonly}, \ref{tab:datasets.nummissingonly}, and \ref{tab:datasets.rest} present summary statistics for the datasets with only categorical features missing, only continuous features missing, and both categorical and continuous features missing respectively. We use the datasets from Tables \ref{tab:datasets.nummissingonly} and \ref{tab:datasets.rest} to compare the performance of different impute-then-regress strategies and our adaptive regression models.

For these datasets, we consider two categories of signal $Y$, real-world and synthetic signals.
\begin{table}[ht]
\footnotesize
\centering
\begin{tabular}{l|ccccc|cc}
Dataset & $n$ & \#features & \#missing cont. & \#missing cat. & $|\mathcal{M}|$ & $d$ & $Y$  \\
\midrule
Ecdat-Males & 4360 & 37 & 0 & 4 & 2 & 38 & cont. \\ 
mushroom & 8124 & 116 & 0 & 4 & 2 & 117 & bin. \\ 
post-operative-patient & 90 & 23 & 0 & 4 & 2 & 24 & bin. \\ 
breast-cancer & 286 & 41 & 0 & 7 & 3 & 43 & bin. \\ 
heart-disease-cleveland & 303 & 28 & 0 & 8 & 3 & 30 & bin. \\ 
COUNT-loomis & 384 & 9 & 0 & 9 & 4 & 12 & cont. \\ 
Zelig-coalition2 & 314 & 24 & 0 & 14 & 2 & 25 & \texttt{NA} \\ 
shuttle-landing-control & 15 & 16 & 0 & 16 & 6 & 21 & bin. \\ 
congressional-voting-records & 435 & 32 & 0 & 32 & 60 & 48 & bin. \\ 
lung-cancer & 32 & 157 & 0 & 33 & 3 & 159 & bin. \\ 
soybean-large & 307 & 98 & 0 & 98 & 8 & 132 & bin. \\ 
\bottomrule
    \end{tabular}
    \caption{Description of the 11 datasets in our library where the features affected by missingness are categorical features only. $n$ denotes the number of observations. The columns `\#features', `\#missing cont.', and `\#missing cat.' report the total number of features, the number of continuous features affected by missingness, and the number of categorical features affected by missingness, respectively. $|\mathcal{M}|$ correspond to the number of unique missingness patterns $\bm{m} \in \{0,1\}^d$ observed, where $d$ is the total number of features after one-hot-encoding of the categorical features. The final column $Y$ indicates whether the dependent variable is binary or continuous (if available).}
    \label{tab:datasets.catmissingonly}
\end{table}
\begin{table}[ht]
\footnotesize
\centering
\begin{tabular}{l|ccccc|cc}
Dataset & $n$ & \#features & \#missing cont. & \#missing cat. & $|\mathcal{M}|$ & $d$ & $Y$  \\
\midrule
auto-mpg & 398 & 13 & 1 & 0 & 2 & 13 & cont. \\ 
breast-cancer-wisconsin-original & 699 & 9 & 1 & 0 & 2 & 9 & bin. \\ 
breast-cancer-wisconsin-prognostic & 198 & 32 & 1 & 0 & 2 & 32 & bin. \\ 
dermatology & 366 & 130 & 1 & 0 & 2 & 130 & cont. \\ 
ggplot2-movies & 58788 & 34 & 1 & 0 & 2 & 34 & \texttt{NA} \\ 
indian-liver-patient & 583 & 11 & 1 & 0 & 2 & 11 & bin. \\ 
rpart-car.test.frame & 60 & 81 & 1 & 0 & 2 & 81 & bin. \\ 
Ecdat-MCAS & 180 & 13 & 2 & 0 & 3 & 13 & cont. \\ 
MASS-Cars93 & 93 & 64 & 2 & 0 & 3 & 64 & cont. \\ 
car-Davis & 200 & 6 & 2 & 0 & 4 & 6 & \texttt{NA} \\ 
car-Freedman & 110 & 4 & 2 & 0 & 2 & 4 & \texttt{NA} \\ 
car-Hartnagel & 37 & 8 & 2 & 0 & 2 & 8 & \texttt{NA} \\ 
datasets-airquality & 153 & 4 & 2 & 0 & 4 & 4 & \texttt{NA} \\ 
mlmRev-Gcsemv & 1905 & 77 & 2 & 0 & 3 & 77 & \texttt{NA} \\ 
MASS-Pima.tr2 & 300 & 7 & 3 & 0 & 6 & 7 & bin. \\ 
Ecdat-RetSchool & 3078 & 37 & 4 & 0 & 8 & 37 & cont. \\ 
arrhythmia & 452 & 391 & 5 & 0 & 7 & 391 & cont. \\ 
boot-neuro & 469 & 6 & 5 & 0 & 9 & 6 & \texttt{NA} \\ 
reshape2-french\_fries & 696 & 9 & 5 & 0 & 4 & 9 & \texttt{NA} \\ 
survival-mgus & 241 & 15 & 5 & 0 & 11 & 15 & bin. \\ 
sem-Tests & 32 & 6 & 6 & 0 & 8 & 6 & \texttt{NA} \\ 
robustbase-ambientNOxCH & 366 & 13 & 13 & 0 & 45 & 13 & \texttt{NA} \\ 
\bottomrule
    \end{tabular}
    \caption{Description of the 22 datasets in our library where the features affected by missingness are numerical features only. $n$ denotes the number of observations. The columns `\#features', `\#missing cont.', and `\#missing cat.' report the total number of features, the number of continuous features affected by missingness, and the number of categorical features affected by missingness, respectively. $|\mathcal{M}|$ correspond to the number of unique missingness patterns $\bm{m} \in \{0,1\}^d$ observed, where $d$ is the total number of features after one-hot-encoding of the categorical features. The final column $Y$ indicates whether the dependent variable is binary or continuous (if available).}
    \label{tab:datasets.nummissingonly}
\end{table}

\begin{table}[ht]
\footnotesize
\centering
\begin{tabular}{l|ccccc|cc}
Dataset & $n$ & \#features & \#missing cont. & \#missing cat. & $|\mathcal{M}|$ & $d$ & $Y$  \\
\midrule
pscl-politicalInformation & 1800 & 1440 & 1 & 1431 & 3 & 1441 & bin. \\ 
car-SLID & 7425 & 8 & 2 & 3 & 8 & 9 & \texttt{NA} \\ 
rpart-stagec & 146 & 15 & 2 & 3 & 4 & 16 & \texttt{NA} \\ 
Ecdat-Schooling & 3010 & 51 & 2 & 8 & 9 & 53 & cont. \\ 
mammographic-mass & 961 & 15 & 2 & 13 & 9 & 18 & bin. \\ 
cluster-plantTraits & 136 & 68 & 2 & 37 & 16 & 85 & \texttt{NA} \\ 
mlmRev-star & 24613 & 122 & 2 & 72 & 19 & 128 & cont. \\ 
car-Chile & 2532 & 14 & 3 & 3 & 7 & 15 & bin. \\ 
heart-disease-hungarian & 294 & 25 & 3 & 17 & 16 & 31 & bin. \\ 
heart-disease-switzerland & 123 & 26 & 3 & 18 & 12 & 32 & bin. \\ 
ggplot2-msleep & 83 & 35 & 3 & 29 & 15 & 37 & \texttt{NA} \\ 
survival-cancer & 228 & 13 & 4 & 4 & 8 & 14 & cont. \\ 
heart-disease-va & 200 & 25 & 4 & 13 & 18 & 30 & bin. \\ 
MASS-survey & 237 & 24 & 4 & 19 & 8 & 29 & \texttt{NA} \\ 
hepatitis & 155 & 32 & 5 & 20 & 21 & 42 & bin. \\ 
automobile & 205 & 69 & 6 & 2 & 7 & 70 & cont. \\ 
echocardiogram & 132 & 8 & 6 & 2 & 13 & 9 & bin. \\ 
thyroid-disease-allbp & 2800 & 52 & 6 & 46 & 25 & 54 & bin. \\ 
thyroid-disease-allhyper & 2800 & 52 & 6 & 46 & 25 & 54 & bin. \\ 
thyroid-disease-allhypo & 2800 & 52 & 6 & 46 & 25 & 54 & bin. \\ 
thyroid-disease-allrep & 2800 & 52 & 6 & 46 & 25 & 54 & bin. \\ 
thyroid-disease-dis & 2800 & 52 & 6 & 46 & 25 & 54 & bin. \\ 
thyroid-disease-sick & 2800 & 52 & 6 & 46 & 25 & 54 & bin. \\ 
survival-pbc & 418 & 27 & 7 & 17 & 8 & 32 & bin. \\ 
thyroid-disease-sick-euthyroid & 3163 & 43 & 7 & 36 & 23 & 44 & bin. \\ 
horse-colic & 300 & 60 & 7 & 52 & 171 & 73 & bin. \\ 
plyr-baseball & 21699 & 296 & 9 & 14 & 18 & 297 & \texttt{NA} \\ 
communities-and-crime & 1994 & 126 & 22 & 3 & 4 & 127 & cont. \\ 
communities-and-crime-2 & 2215 & 129 & 22 & 3 & 4 & 130 & cont. \\ 
wiki4he & 913 & 73 & 44 & 24 & 236 & 78 & cont. \\ 
\bottomrule
    \end{tabular}
    \caption{Description of the 30 datasets in our library where the features affected by missingness are numerical features only. $n$ denotes the number of observations. The columns `\#features', `\#missing cont.', and `\#missing cat.' report the total number of features, the number of continuous features affected by missingness, and the number of categorical features affected by missingness, respectively. $|\mathcal{M}|$ correspond to the number of unique missingness patterns $\bm{m} \in \{0,1\}^d$ observed, where $d$ is the total number of features after one-hot-encoding of the categorical features. The final column $Y$ indicates whether the dependent variable is binary or continuous (if available).}
    \label{tab:datasets.rest}
\end{table}

\FloatBarrier

\subsubsection{Real signal $Y$}\label{sssec:data.realx.realy} 46 out of the 63 datasets had an identified target variable $Y$, which could be continuous or binary. If $Y$ is categorical with more than 1 category, we considered the binary one-vs-all classification task using the first (alphabetical order) category. For regression (resp. classification) tasks, we use the mean squared error (resp. logistic log-likelihood) as the training loss and measure predictive power in terms of $R^2$ (resp. $2 \times AUC - 1$). Again, we choose this measure over mean square error (resp. accuracy or $AUC$) because it is normalized between 0 and 1, and can be more safely compared and aggregated across datasets.

\subsubsection{Synthetic signal $Y$} \label{sssec:data.realx.syny} To generate synthetic signals $Y$, we use the same three generative models as with synthetic data in Section \ref{ssec:data.synthetic}. However, this requires knowledge of the fully observed input matrix, while we only have access to observations with missing entries, $(\bm{o}(\bm{x}^{(i)}, \bm{m}^{(i)}), \bm{m}^{(i)})$, $i=1,\dots,n$. Therefore, we first generate a fully observed version of the data by performing missing data imputation using the \verb|R| package \verb|missForest| \citep{stekhoven2012missforest}, obtaining a new dataset $\{ (\bm{x}_{\text{full}}^{(i)}, \bm{m}^{(i)}) \}_{i\in [n]}$. We use this dataset to generate synthetic signals $Y$, using the three types of signals described in Section \ref{ssec:data.synthetic}: linear, tree, and neural network.

Regarding the relationship between the missingness pattern $\bm{M}$ and the signal $Y$, we consider three mechanisms:
\begin{itemize}
    \item {\bf MAR: } In this setting, we pass $k=\min(10,d)$ coordinates of $\bm{x}_{\text{full}}$ as input to the function $f(\cdot)$. Out of these $k$ features, we explicitly control $k_{missing} \in \{0,\dots,k\}$, the number of features contributing to the signal that are affected by missingness. Hence, in this setting, the resulting response $Y$ depends directly on the covariates $\bm{X}$ but not on the missingness pattern $\bm{M}$. However, we do not control the correlation between $\bm{X}$ and $\bm{M}$ for two reasons: First, they both come from a real-world dataset which might not satisfy the MAR assumption. Second, imputation does induce some correlation between the imputed dataset $\bm{X}_{full}$ and $\bm{M}$.
    \item {\bf NMAR: } In the second setting, in addition to $k=10$ coordinates of $\bm{x}_{\text{full}}$, we also pass $k_{missing}$ coordinates of $\bm{m}$, so that $Y$ is now a function of both $\bm{X}$ and $\bm{M}$.
    \item {\bf Adversarially Missing (AM): } The third setting generates $Y$ in the same way as the {\bf MAR} setting. After $Y$ is generated, however, we reallocate the missingness patterns across observations so as to ensure the data is NMAR. Formally, we consider the observations $(\bm{o}(\bm{x}_{\text{full}}^{(i)}, \bm{m}^{(\sigma_i)}), \bm{m}^{(\sigma_i)}, y^{(\sigma_i)})$, $i\in [n]$, where $\bm \sigma$ is the permutation maximizing the total sum of missing values $\sum_{i=1}^n{\bm{x}_{\text{full}}^{(i)}}^{\top}\bm{m}^{(\sigma_i)}.$ 
\end{itemize}

For each real-world dataset, this methodology generates up to $3 \times 3 \times 11 = 66$ different instances. 

All together, we obtain four experimental settings, with both synthetic and real signals $Y$. They differ in the relationships between the missingness pattern $\bm{M}$, the design matrix $\bm{X}$ and the signal $Y$ as summarized on Figure \ref{fig:exp.designs}.

\subsection{Evaluation pipeline}
In our numerical experiments, we compare a series of approaches, namely 
\begin{itemize}
    \item Regression on the features that are never missing only (complete-feature regression).
    \item Impute-then-regress methods where the imputation step is performed either via mean imputation or using the chained equation method \verb|mice| \citep{buuren2010mice}. We implement these two approaches with a linear, tree, or random forest model for the downstream predictive model. We treat the type of model as an hyper-parameter. We used the default parameter values for number of imputations and number of iterations in  \verb|mice|.
    \item Three variants of the adaptive linear regression framework from Section \ref{sec:adaptive}: Static coefficients with affine intercept, affine coefficients and intercept, finitely adaptive. 
    \item The joint impute-then-regress heuristic presented in Section \ref{sec:extension.adaptive}. Again, we consider three downstream predictive models: linear, tree, random forest.
\end{itemize}
For adaptive linear regression models with affine intercept or affine coefficients, the hyper-parameters are the Lasso penalty $\lambda$, the amount of ridge regularization $\alpha$ (ElasticNet), and the additional lasso penalty for the adaptive coefficients. For the finitely adaptive linear regression model, the only hyper-parameter is the maximum depth of the tree. The other methods can all be applied with linear, tree-based, or random forest predictors. For linear predictors, the hyper-parameters are the Lasso penalty $\lambda$ and the amount of ridge regularization $\alpha$ (ElasticNet). For tree predictors, the hyper-parameter is the maximum depth. For the random forest predictors, the hyper-parameters are the maximum depth of each tree and the number of trees in the forest. 

All hyper-parameters are cross-validated using a $5$-fold cross-validation procedure on the training set. 

We report out-of-sample predictive power on the test set. For synthetic data, the test set consists of $5,000$ observations. For real data, we hold out $30\%$ of the observations as a test set. 

All experiments are replicated $10$ times, with different (random) split into training/test sets between replications.

{\blue 
\section{Additional Numerical Results on Synthetic Data} \label{sec:add.num.syn}
In this section, we report complementary numerical results to Section \ref{ssec:extension.evaluation}.

Figure \ref{fig:syn.ylin} compares the performance of our heuristic for joint impute-then-regress presented in Section~\ref{ssec:extension.heuristic}, with that of mean impute-then-regress and of random forest (RF) trained directly on missing data models using the ``Missing Incorporated in Attribute'' (MIA) method of \citet{twala2008good} on synthetically generated instances where the signal $Y$ is generated from the covariates $\bm{X}$ using a linear model. Figure~\ref{fig:syn.ynn} replicates this analysis on instances where $Y$ depends non-linearly on $\bm{X}$. Our main conclusions remain valid: We observe that our joint impute-then-regress heuristic competes with mean impute-then-regress when data is MCAR but largely outperforms it (by 10\% or more) in the presence of censoring. The main difference from the linear setting comes from the performance of RF. It achieves a similar performance as our joint impute-then-regress strategy in both cases. 
\begin{figure}
    \begin{subfigure}[t]{\columnwidth}
        \centering
        \includegraphics[width=.65\columnwidth]{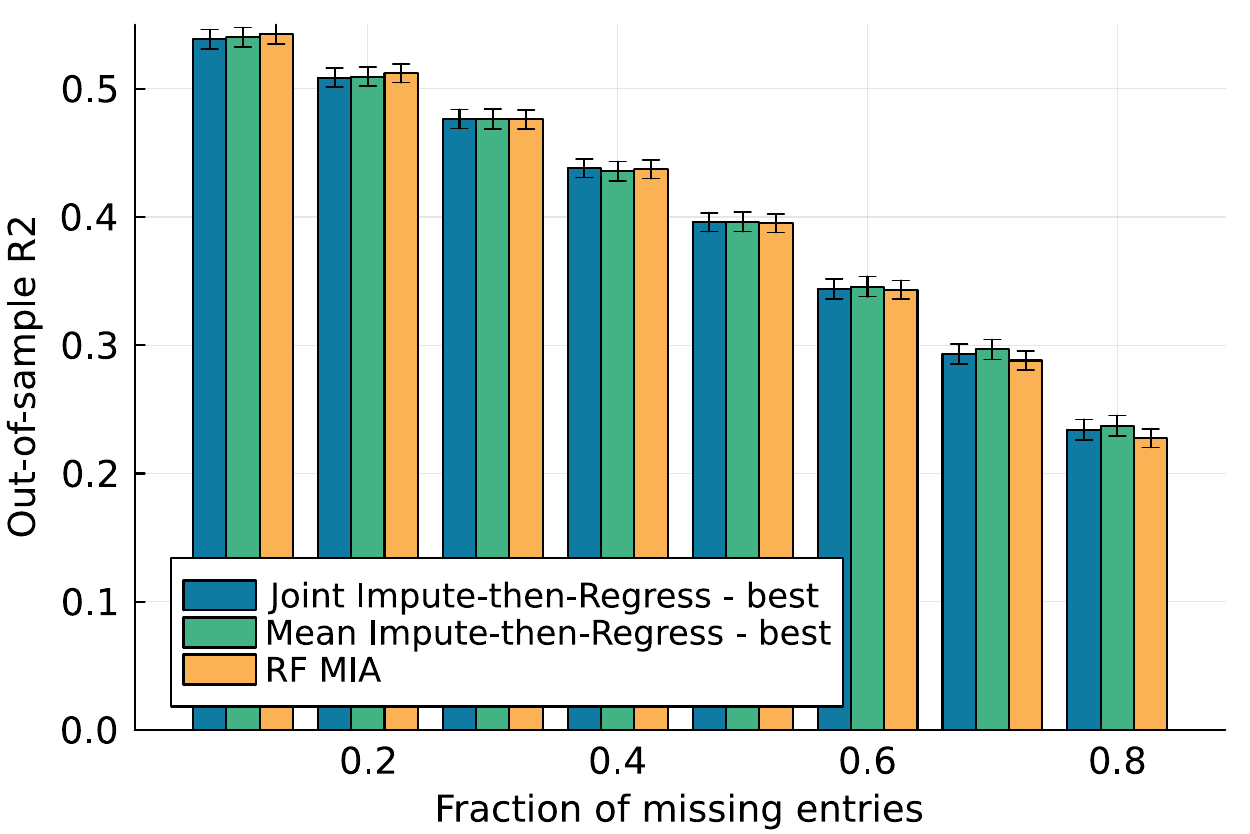}
        \caption{MCAR}
    \end{subfigure} %
    \begin{subfigure}[t]{\columnwidth}
        \centering
        \includegraphics[width=.65\columnwidth]{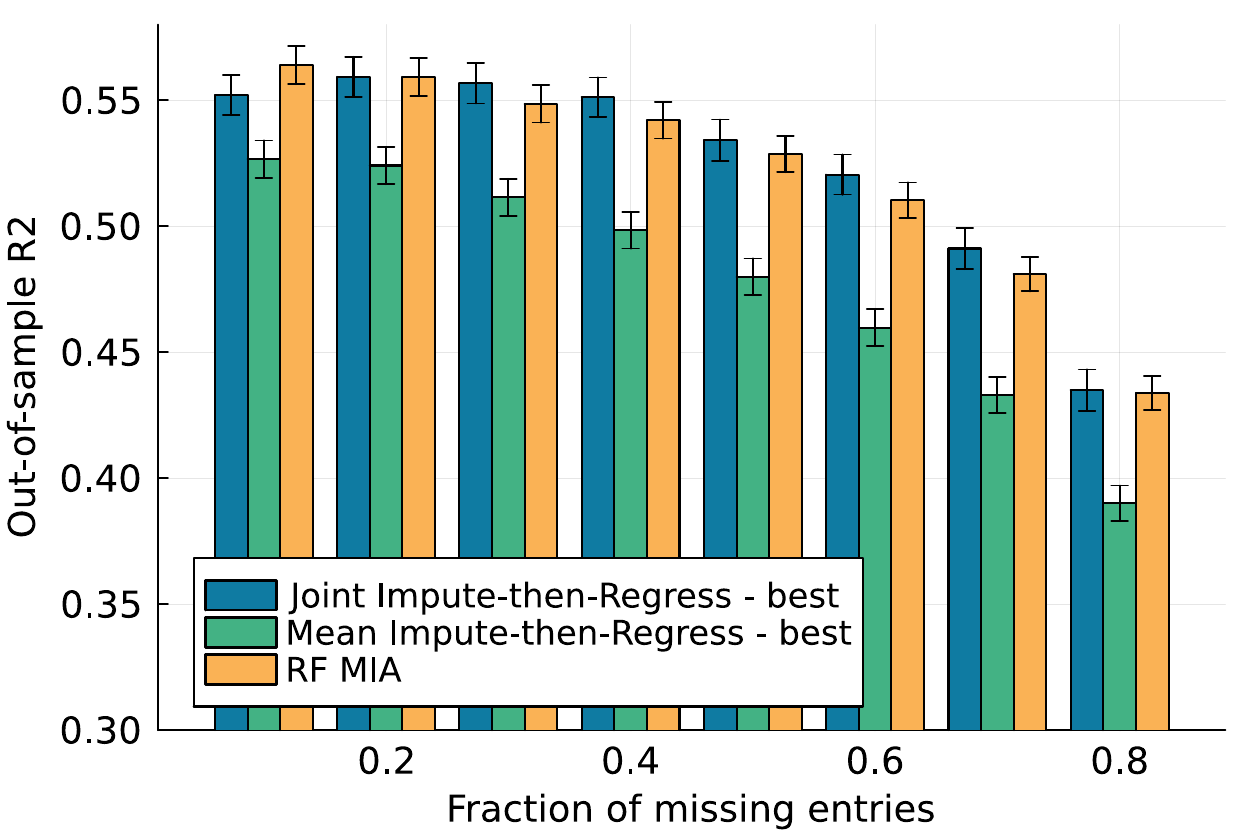}
        \caption{Censoring}
    \end{subfigure} %
    \caption{Out-of-sample $R^2$ of mean-impute-then-regress and joint impute-then-regress (synthetic data, signal produced by a two-layer neural network).}
    \label{fig:syn.ynn}
\end{figure}

Table \ref{tab:syn.res.table} reports the performance of the `best' adaptive LR method, i.e., when the level of adaptivity (affine intercept only, affine, or finite adaptivity) is not fixed a priori but determined via 5-fold cross validation on the training set. To offer a more comprehensive picture, Table \ref{tab:syn.res.table.alr} reports the performance for each level of adaptivity separately. We observe that the two parametric classes (affine intercept only and affine) are superior to the finite adaptive strategy, both in terms of out-of-sample accuracy and computational time. We interpret this observation as being due to the fact that the finitely adaptive model optimizes over a much larger class of models (a class of polynomial weight functions, $\bm{w}(\bm{n})$, of bounded degree, with the maximum degree being related with the maximum depth of the tree, which we cross-validate). 
\begin{table}
    \blue
    \centering \footnotesize
    \begin{tabular}{l|cc|cc}
        Adaptiveness & \multicolumn{2}{c}{MCAR} &  \multicolumn{2}{c}{Censoring} \\
        & Linear & NN & Linear & NN \\
\multicolumn{5}{l}{Out-of-sample $R^2$ (and standard deviation)} \\ 
\midrule
Affine Intercept Only & 0.552 (0.003) & 0.358 (0.003) & \bf 0.647 (0.003) & \bf 0.537 (0.003)  \\ 
Affine & \bf 0.565 (0.003) & \bf 0.366 (0.003) & \bf 0.647 (0.003) & 0.535 (0.027)  \\ 
Finite & 0.538 (0.004) & 0.332 (0.006) & 0.555 (0.027) & -7.083 (5.724)  \\ \\
\multicolumn{5}{l}{Average computational time (in seconds)} \\
\midrule 
Affine Intercept Only & \bf 1.444 & \bf 1.395 & \bf 2.066 & \bf 2.706 \\ 
Affine & 50.051 & 62.778 & 72.061 & 91.474 \\ 
Finite & 11.323 & 11.491 & 9.726 & 11.335 \\ 
    \end{tabular}
    \caption{\blue Performance (out-of-sample $R^2$ and computational time) for each adaptive linear regression approach on synthetic datasets, where $Y$ is generated according to a linear or a neural network model, and the data is either missing completely at random or censored. Results averaged over $50$ training sizes, $8$ missingness levels, and $10$ random training/test splits. }
    \label{tab:syn.res.table.alr}
\end{table}

To confirm this intuition, Figure \ref{fig:syn.n.alr.ylinear} compares the performance of each level of adaptivity, as the number of samples increases $n$, for synthetic data, linear signal $Y$, and a fixed proportion of missing entries $p=0.3$. While the performance of the finite adaptive model competes with that of the other two as $n \rightarrow 1,000$, it is significantly worse for low values of $n$ ($n=40$--$200$) suggesting that it constitutes a much more complex model class that inherently requires more data to be trained. 
\begin{figure}
    \begin{subfigure}[t]{\columnwidth}
        \centering
        \includegraphics[width=.65\columnwidth]{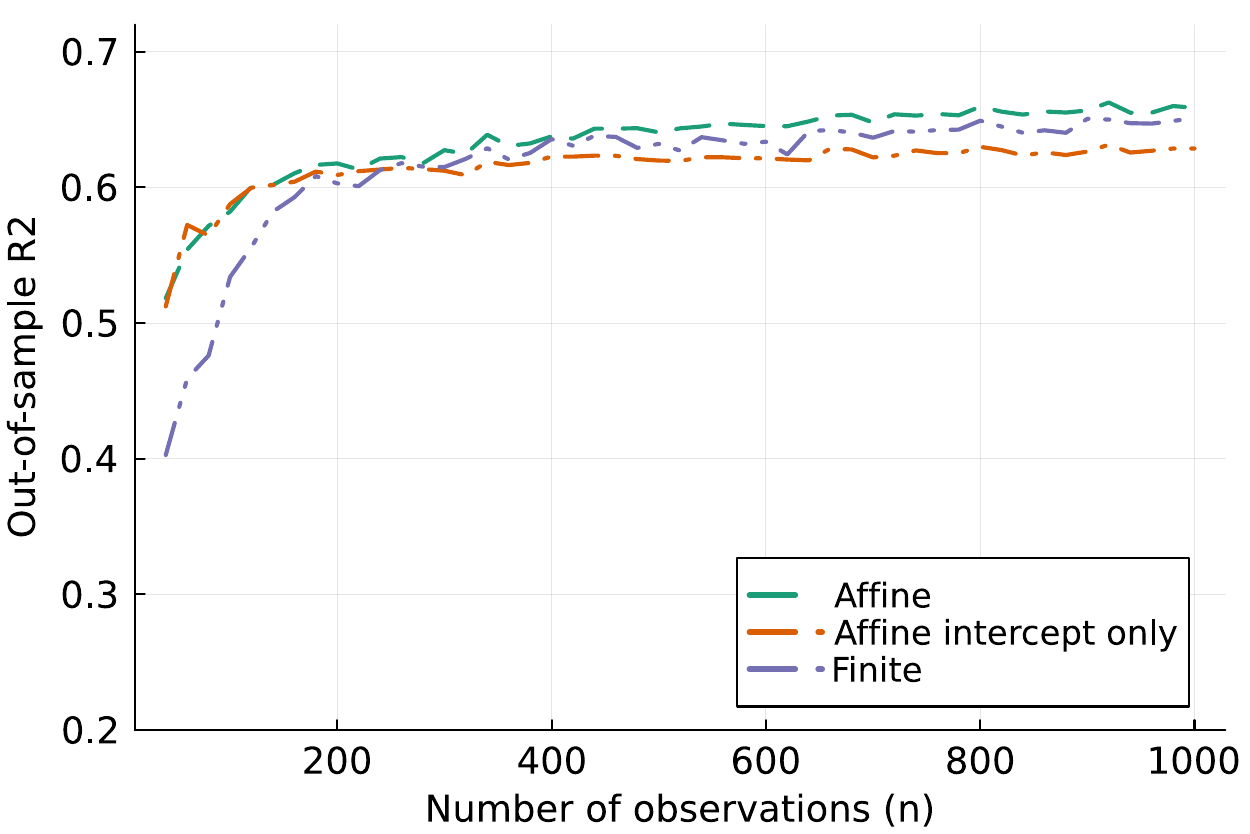}
        \caption{MCAR}
    \end{subfigure} %
    \begin{subfigure}[t]{\columnwidth}
        \centering
        \includegraphics[width=.65\columnwidth]{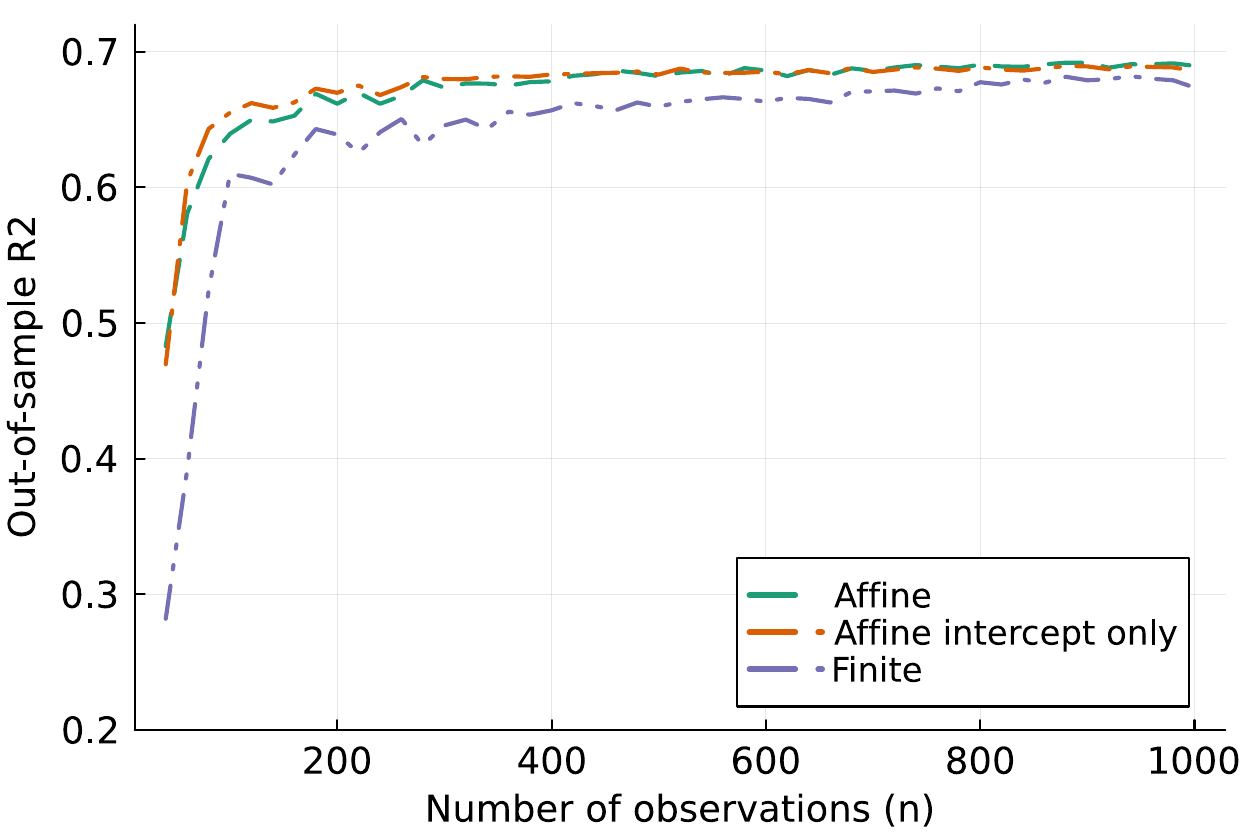}
        \caption{Censoring}
    \end{subfigure} %
    \caption{\blue Out-of-sample $R^2$ of different adaptive linear regression models as the number of observations $n$ increases (synthetic data, signal produced by a linear model, 30\% of missing entries).}
    \label{fig:syn.n.alr.ylinear}
\end{figure}
Looking at the depth of the tree resulting from the greedy procedure Algorithm \ref{alg:finite} (Figure \ref{fig:syn.n.depth.ylinear}), we observe that, when the number of observations is small, the chosen depth is higher. Together with the poor predictive power, this suggests that the finite adaptive model is too complex to be properly calibrated on small sample sizes and is at higher risk of overfitting. 
\begin{figure}
    \begin{subfigure}[t]{\columnwidth}
        \centering
        \includegraphics[width=.65\columnwidth]{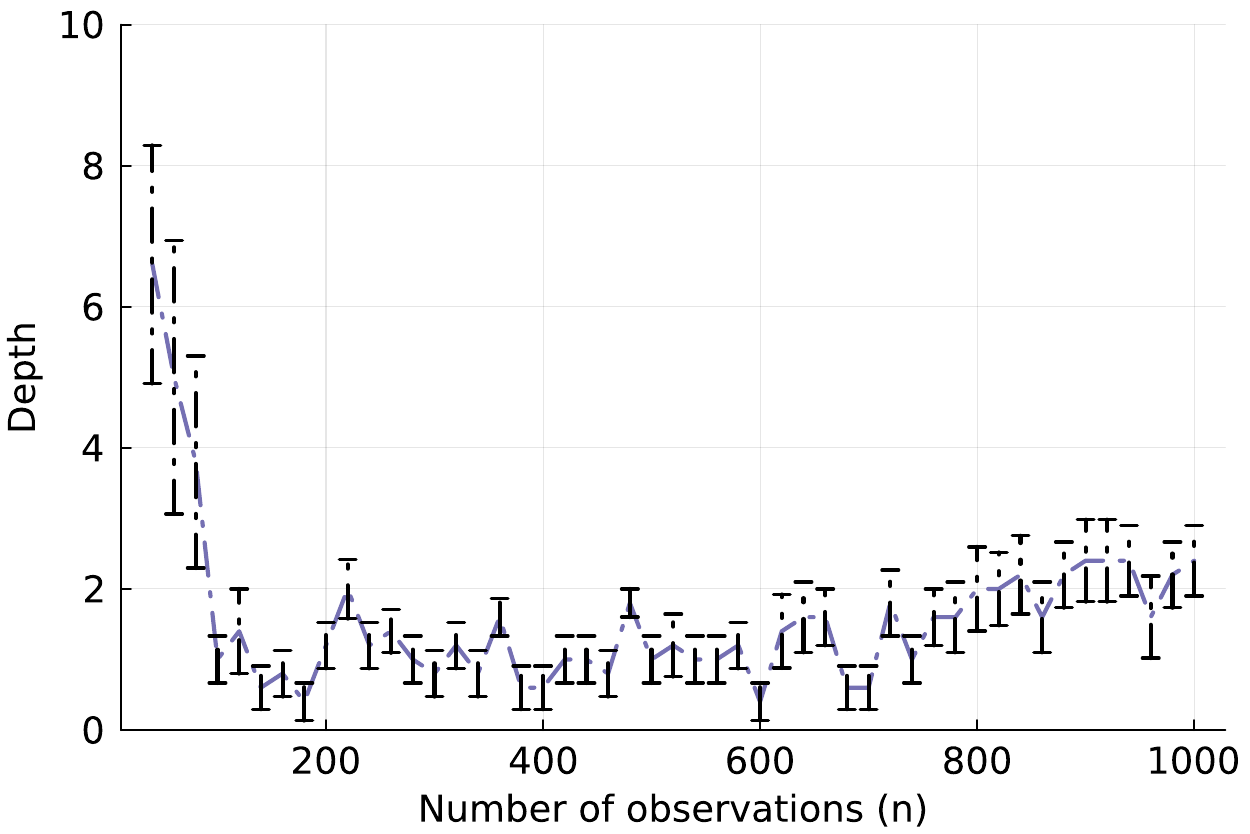}
        \caption{MCAR}
    \end{subfigure} %
    \begin{subfigure}[t]{\columnwidth}
        \centering
        \includegraphics[width=.65\columnwidth]{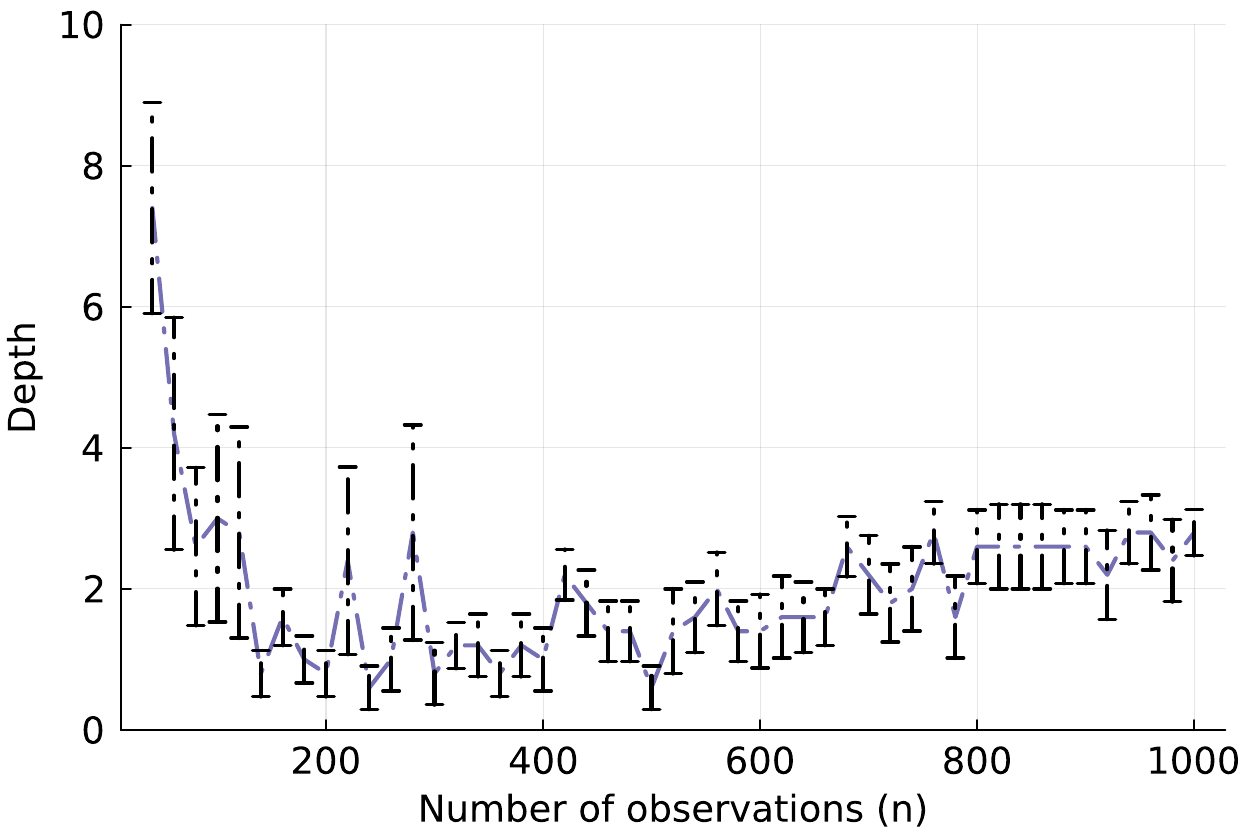}
        \caption{Censoring}
    \end{subfigure} %
    \caption{\blue Tree depth selected for Algorithm \ref{alg:finite} via 5-fold cross-validation on the training set, for the finite adaptive linear regression model, as the number of training observations $n$ increases (synthetic data, signal produced by a linear model, 30\% of missing entries).}
    \label{fig:syn.n.depth.ylinear}
\end{figure}

Table \ref{tab:syn.res.table} compares a set of different methods for handling missing data in terms of predictive power, on the synthetically generated instances. Table \ref{tab:syn.res.table.time} compare them in terms of computational time. 
\begin{table}
    \blue 
    \centering \footnotesize
    \begin{tabular}{l|cc|cc}
        Method & \multicolumn{2}{c}{MCAR} &  \multicolumn{2}{c}{Censoring} \\
        & Linear & NN & Linear & NN \\
\midrule
Adaptive LR - best & 32.266 (0.529) & 35.665 (0.744) & 38.273 (0.764) & 75.185 (0.902)  \\ 
Joint Impute-then-Regress - best & 20.262 (0.796) & 130.274 (1.759) & 15.947 (0.161) & 35.459 (0.888) \\ 
\midrule
Mean Impute-then-Regress - best & 2.601 (0.132) & 16.037 (0.26) & 29.996 (0.352) & 29.977 (0.346)   \\ 
mice Impute-then-Regress - best & 4.27 (0.15) & 19.695 (0.248) & 8.885 (0.132) & 11.716 (0.242)   \\ 
\midrule
CART MIA & 0.197 (0.004) & 0.18 (0.003) & 0.141 (0.004) & 0.144 (0.003)  \\ 
RF MIA & 44.476 (0.849) & 31.612 (0.331) & 26.966 (0.286) & 29.105 (0.315)  \\ 
XGBoost & 87.768 (0.799) & 81.959 (0.743) & 85.738 (0.777) & 89.378 (0.816)  \\ 
    \end{tabular}
    \caption{\blue Average computational time (and standard error), in seconds, for each method on synthetic datasets, where $Y$ is generated according to a linear or a neural network model, and the data is either missing completely at random or censored. Results averaged over $50$ training sizes, $8$ missingness levels, and $10$ random training/test splits. }
    \label{tab:syn.res.table.time}
\end{table}

\FloatBarrier
}
\section{Additional Numerical Results on Real Data}\label{sec:add.num}
In this section, we report complementary numerical results to Section \ref{sec:realdata}.

{\blue 
Figure \ref{fig:validation.alr} compares the average out-of-sample performance for our four adaptive linear regression models. We report computational time in Figure \ref{fig:validation.alr.time}. Although the affine intercept and affine models perform equally in terms of predictive power, the affine intercept model is much cheaper computationally due to the limited number of additional features it involves ($\mathcal{O}(p)$ vs. $\mathcal{O}(p^2)$), hence is the adaptive LR model we would recommend in practice. 
\begin{figure}
    \centering
    \includegraphics[width=.8\columnwidth]{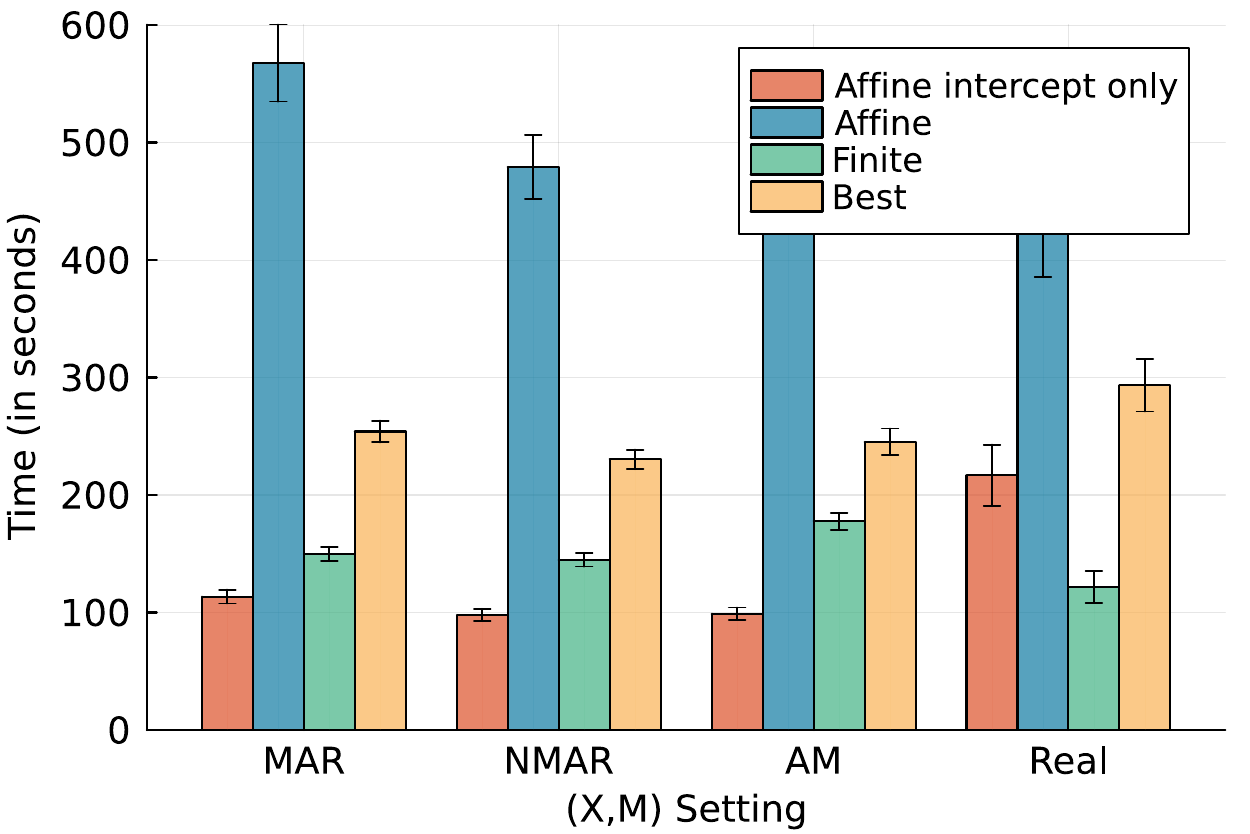}
    \caption{\blue Computational time of adaptive linear regression models on real-world design matrix $\bm{X}$.}
    \label{fig:validation.alr.time}
\end{figure}

Figure \ref{fig:validation.jitr} compares the average out-of-sample performance for our joint impute-then-regress approach, with four different downstream predictive models. We report computational time in Figure \ref{fig:validation.jitr.time}. While random forest (RF) and XGBoost enjoy the same predictive power, training a random forest model (at least in the scikit learn implementation we use) is significantly slower than XGBoost, so we recommend cross-validating the downstream predictor (the `best' variant) between linear and XGBoost only.
\begin{figure}
    \centering
    \includegraphics[width=.8\columnwidth]{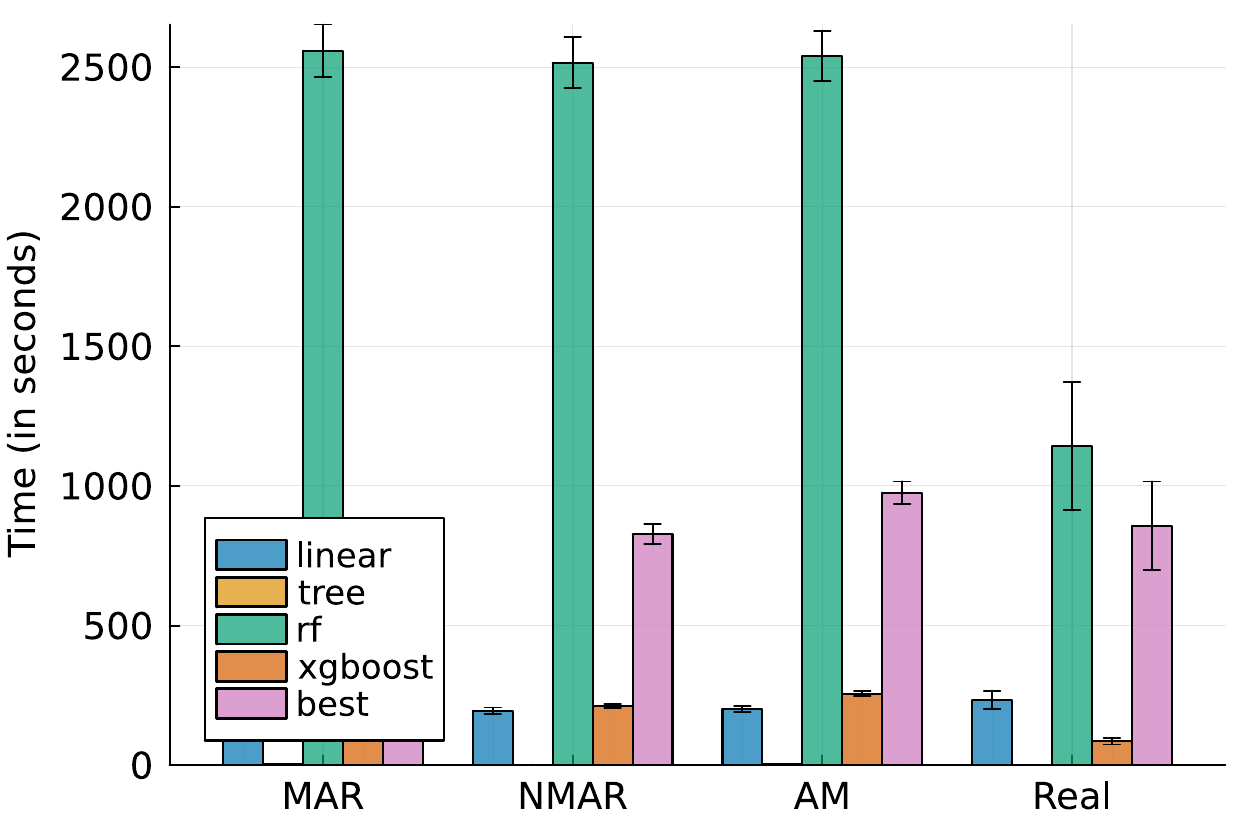}
    \caption{\blue Computational time of joint impute-then-regress method. for different downstream regression function, on real-world design matrix $\bm{X}$.}
    \label{fig:validation.jitr.time}
\end{figure}
}

The out-of-sample accuracy reported in Figure \ref{fig:realx.comparison} 
aggregates, for real signals $Y$, $R^2$ for regression tasks and $2 (AUC - 0.5)$ for classification ones. Although both metrics range from 0 to 1, aggregating results for different metrics could lead to spurious conclusions. As a  sanity check, Figure \ref{fig:realx.realy} compares the performance of mean-impute-then-regress and our adaptive regression models, on the same datasets (real design matrix $\bm{X}$ and real signal $Y$), yet for regression and classification separately. These stratified results confirm a small edge for our adaptive linear models compared with mean impute-then-regress. Furthermore, we observe that our adaptive linear regression models are stronger in regression settings than classification ones.
\begin{figure}
    \centering
    \includegraphics[width=0.7\textwidth]{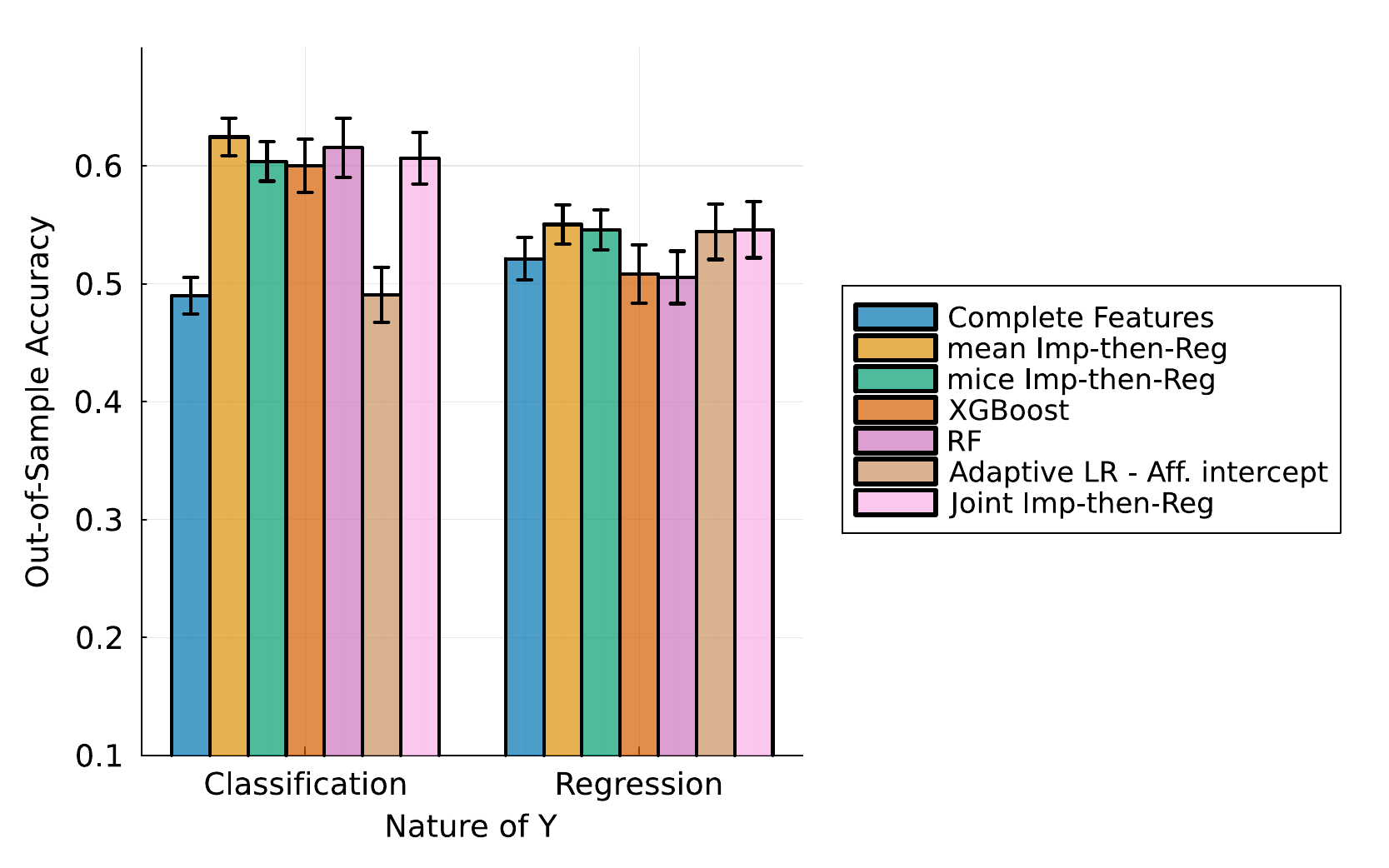}
    \caption{Comparison of adaptive linear regression and joint impute-then-regress methods vs. mean-impute-then-regress on real-world data.}
    \label{fig:realx.realy}
\end{figure}

\FloatBarrier
Figures \ref{fig:realx.comparison} suggests that our adaptive regression models provide a more substantial advantage compared with naive impute-then-regress approaches as the missingness mechanism deviates further from the MAR assumption.
To support this finding, Figure~\ref{fig:adaptive.all} 
reports out-of-sample accuracy for each synthetic signal setting, as a function of the fraction of missing features contributing to the signal. We observe that mean-impute-then-regress methods achieve comparable accuracy as the oracle, adaptive linear regression, and joint-impute-then-regress models in the MAR and NMAR setting. However, they experience a significant drop in accuracy in the AM setting, which grows as the proportion of missing features $k_{missing}/k$ increases.

\begin{figure}
    \centering
    \begin{subfigure}[t]{0.5\textwidth}
        \centering
        \includegraphics[width=\columnwidth]{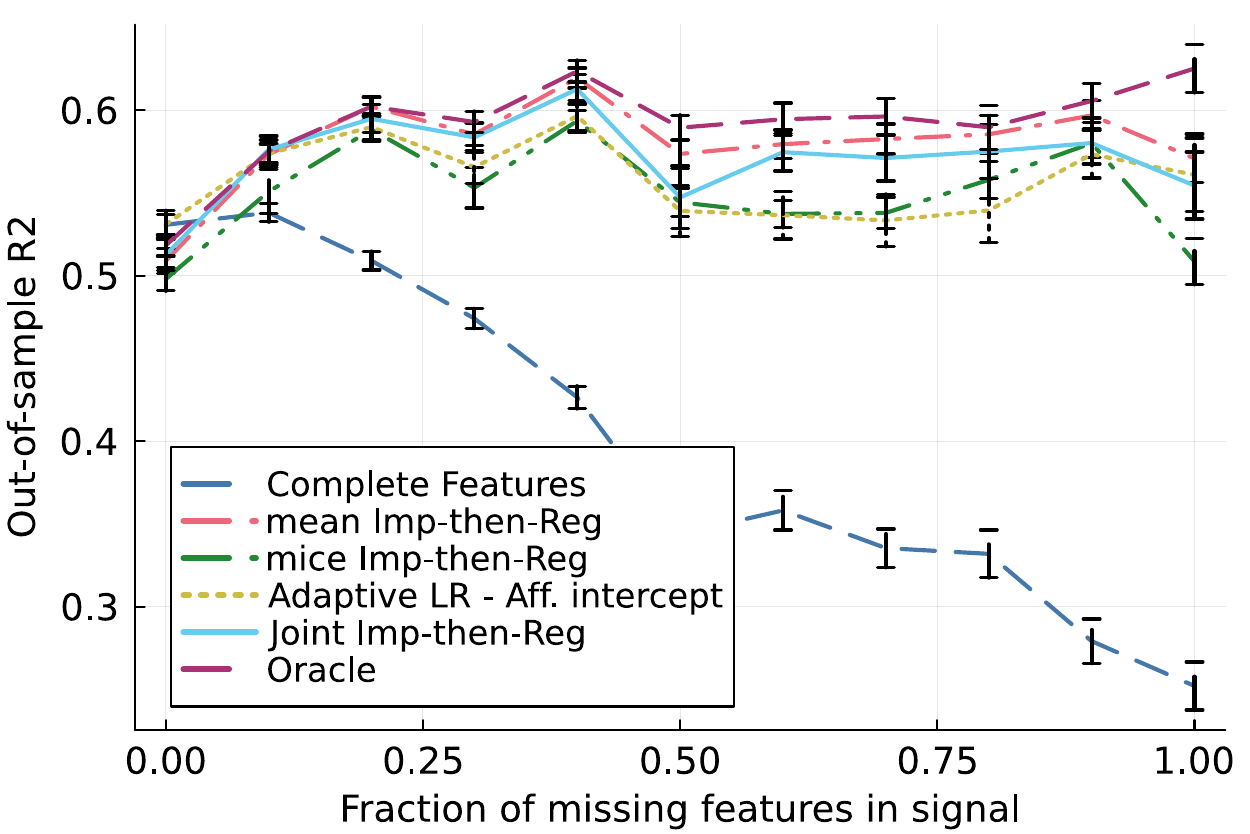}
        \caption{MAR}
    \end{subfigure} %
    
    \begin{subfigure}[t]{0.5\textwidth}
        \centering
        \includegraphics[width=\columnwidth]{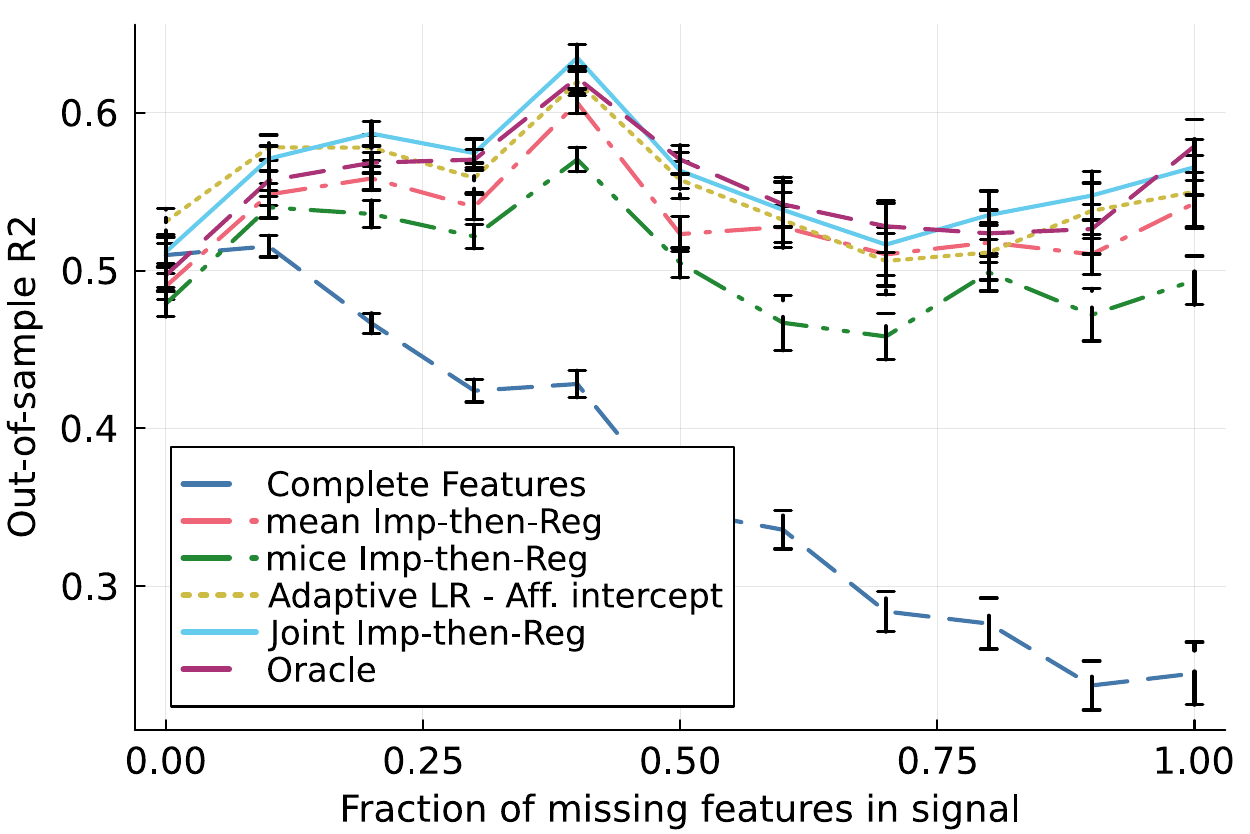}
        \caption{NMAR}
    \end{subfigure} %
    
    \begin{subfigure}[t]{0.5\textwidth}
        \centering
        \includegraphics[width=\columnwidth]{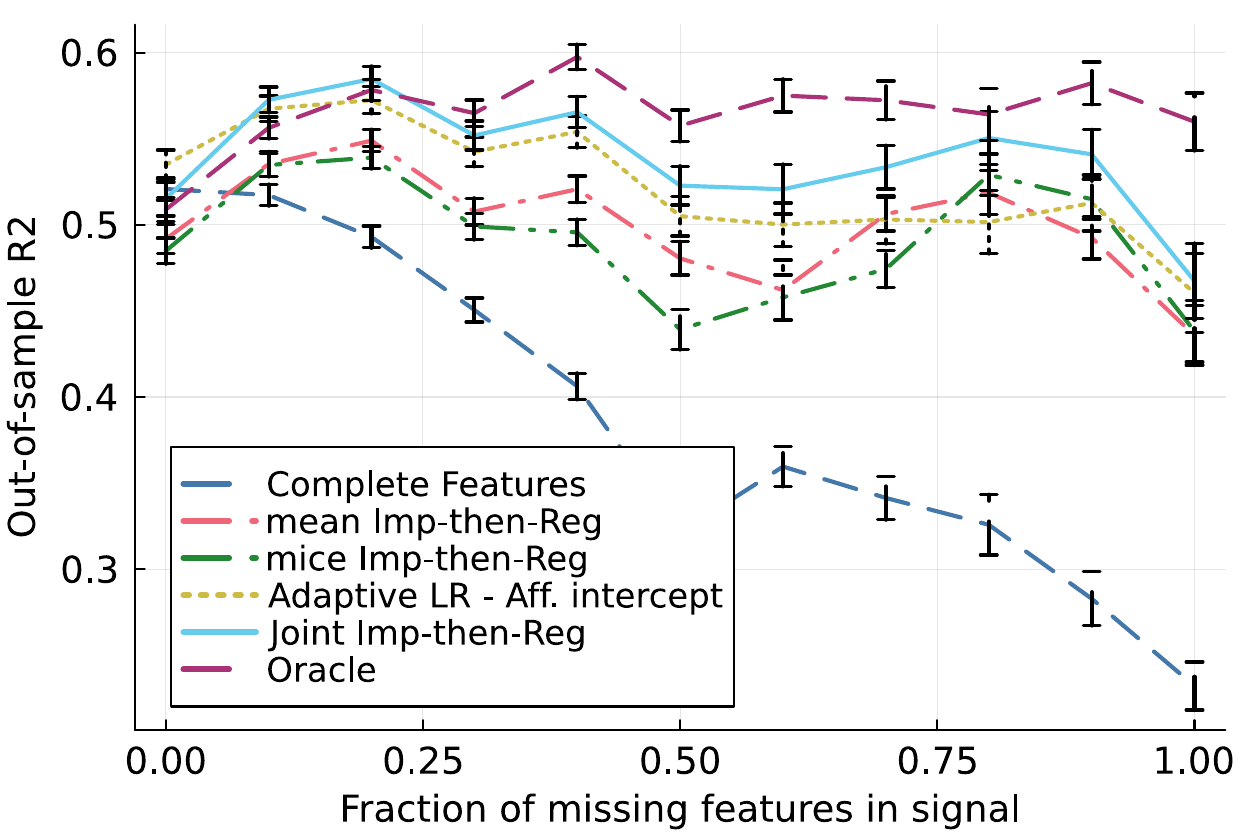}
        \caption{AM}
    \end{subfigure} %
    \caption{Comparison of adaptive regression methods vs. impute-then-regress on synthetic data, as $k_{missing}$ increases. We also compare to two extremes: the ``Oracle'' which has access to the fully observed data and the ``Complete Feature'' regression which regresses on features that are never missing only.}
    \label{fig:adaptive.all}
\end{figure}

\FloatBarrier
We report the data for the average performance (with standard errors) used in Figures \ref{fig:realx.comparison} and \ref{fig:adaptive.all} in Table \ref{tab:fig4.data} and \ref{tab:fig5.data} respectively.
\begin{table}
    \centering
\begin{tabular}{lcccc}
 Method  &    MAR    &   NMAR   &  AM  & Real \\ 
 \midrule
Complete Features & 0.469 (0.004) & 0.461 (0.004) & 0.472 (0.004) & 0.5 (0.017) \\ \midrule 
mean Imp-then-Reg & 0.570 (0.004) & 0.557 (0.004) & 0.532 (0.004) & 0.596 (0.018) \\ 
mice Imp-then-Reg & 0.552 (0.004) & 0.535 (0.004) & 0.521 (0.004) & 0.584 (0.018) \\ \midrule
XGBoost & 0.504 (0.004) & 0.496 (0.004) & 0.487 (0.004) & 0.569 (0.017) \\ 
RF & 0.489 (0.003) & 0.491 (0.003) & 0.480 (0.003) & 0.579 (0.018) \\ 
\midrule 
\blue Adaptive LR - Affine intercept only & 0.559 (0.004) & 0.557 (0.003) & 0.541 (0.003) & 0.508 (0.017) \\ 

\blue Joint Imp-then-Reg   & 0.564 (0.004) & 0.557 (0.004) & 0.546 (0.004) & 0.586 (0.017) \\ 
\midrule
Oracle  & 0.575 (0.004) & 0.573 (0.004) & 0.576 (0.004) &  \\ 
\bottomrule 
 \end{tabular} 
    \caption{Data for Figure \ref{fig:realx.comparison}: Average out-of-sample accuracy (and standard errors) for the different methods, in different missingness settings, for the semi-synthetic and real experiments.}
    \label{tab:fig4.data}
\end{table}

\begin{table}
    \centering \footnotesize
\begin{adjustbox}{angle=90}
\begin{tabular}{lcccccc}
 Fraction missing  &      Complete Features      &     mean Imp-then-Reg     &    mice Imp-then-Reg    &   Adaptive LR  - Aff. intercept  &  Joint Imp-then-Reg  & Oracle \\ 
\multicolumn{6}{l}{Setting:    MAR   } \\ 
 \midrule 
 0.0  & 0.531 (0.006) & 0.509 (0.008) & 0.498 (0.007) & 0.531 (0.008) & 0.513 (0.01) & 0.518 (0.006) \\ 
 0.1  & 0.538 (0.006) & 0.573 (0.006) & 0.551 (0.013) & 0.574 (0.008) & 0.576 (0.008) & 0.575 (0.007) \\ 
 0.2  & 0.509 (0.006) & 0.602 (0.006) & 0.588 (0.006) & 0.59 (0.008) & 0.595 (0.008) & 0.602 (0.006) \\ 
 0.3  & 0.474 (0.006) & 0.585 (0.007) & 0.553 (0.012) & 0.566 (0.01) & 0.584 (0.009) & 0.593 (0.006) \\ 
 0.4  & 0.427 (0.007) & 0.62 (0.006) & 0.594 (0.006) & 0.597 (0.01) & 0.613 (0.009) & 0.624 (0.007) \\ 
 0.5  & 0.344 (0.007) & 0.574 (0.009) & 0.544 (0.009) & 0.539 (0.015) & 0.547 (0.019) & 0.59 (0.008) \\ 
 0.6  & 0.358 (0.012) & 0.58 (0.009) & 0.537 (0.008) & 0.537 (0.014) & 0.575 (0.011) & 0.595 (0.01) \\ 
 0.7  & 0.335 (0.012) & 0.583 (0.009) & 0.538 (0.01) & 0.534 (0.016) & 0.571 (0.014) & 0.596 (0.011) \\ 
 0.8  & 0.332 (0.014) & 0.586 (0.011) & 0.558 (0.011) & 0.54 (0.019) & 0.575 (0.016) & 0.59 (0.013) \\ 
 0.9  & 0.279 (0.013) & 0.597 (0.009) & 0.58 (0.009) & 0.573 (0.014) & 0.58 (0.013) & 0.606 (0.01) \\ 
 1.0  & 0.252 (0.015) & 0.571 (0.015) & 0.509 (0.014) & 0.561 (0.022) & 0.555 (0.02) & 0.625 (0.014) \\ 
\multicolumn{6}{l}{Setting:   NMAR  } \\ 
 \midrule 
 0.0  & 0.51 (0.007) & 0.49 (0.008) & 0.479 (0.008) & 0.531 (0.008) & 0.512 (0.009) & 0.497 (0.008) \\ 
 0.1  & 0.515 (0.007) & 0.548 (0.007) & 0.54 (0.007) & 0.578 (0.008) & 0.571 (0.008) & 0.557 (0.006) \\ 
 0.2  & 0.467 (0.006) & 0.558 (0.007) & 0.536 (0.009) & 0.578 (0.008) & 0.587 (0.008) & 0.568 (0.007) \\ 
 0.3  & 0.424 (0.007) & 0.54 (0.008) & 0.522 (0.008) & 0.559 (0.009) & 0.574 (0.009) & 0.57 (0.006) \\ 
 0.4  & 0.428 (0.008) & 0.606 (0.007) & 0.57 (0.007) & 0.62 (0.009) & 0.635 (0.008) & 0.622 (0.007) \\ 
 0.5  & 0.348 (0.009) & 0.523 (0.011) & 0.505 (0.009) & 0.557 (0.012) & 0.563 (0.011) & 0.57 (0.009) \\ 
 0.6  & 0.336 (0.012) & 0.528 (0.013) & 0.467 (0.017) & 0.532 (0.018) & 0.538 (0.021) & 0.542 (0.014) \\ 
 0.7  & 0.284 (0.013) & 0.51 (0.013) & 0.458 (0.015) & 0.506 (0.021) & 0.516 (0.026) & 0.528 (0.016) \\ 
 0.8  & 0.276 (0.016) & 0.518 (0.013) & 0.499 (0.012) & 0.511 (0.017) & 0.535 (0.015) & 0.524 (0.015) \\ 
 0.9  & 0.237 (0.016) & 0.51 (0.013) & 0.472 (0.016) & 0.538 (0.018) & 0.547 (0.015) & 0.526 (0.016) \\ 
 1.0  & 0.245 (0.02) & 0.543 (0.015) & 0.494 (0.015) & 0.55 (0.023) & 0.565 (0.018) & 0.579 (0.017) \\ 
\multicolumn{6}{l}{Setting:  AM } \\ 
 \midrule 
 0.0  & 0.521 (0.006) & 0.492 (0.008) & 0.485 (0.007) & 0.535 (0.009) & 0.515 (0.01) & 0.509 (0.007) \\ 
 0.1  & 0.517 (0.006) & 0.535 (0.007) & 0.535 (0.007) & 0.567 (0.007) & 0.573 (0.007) & 0.556 (0.006) \\ 
 0.2  & 0.493 (0.006) & 0.549 (0.006) & 0.539 (0.006) & 0.572 (0.008) & 0.585 (0.007) & 0.578 (0.006) \\ 
 0.3  & 0.451 (0.007) & 0.508 (0.008) & 0.499 (0.008) & 0.542 (0.008) & 0.552 (0.009) & 0.565 (0.008) \\ 
 0.4  & 0.406 (0.008) & 0.521 (0.008) & 0.496 (0.008) & 0.554 (0.009) & 0.565 (0.009) & 0.597 (0.007) \\ 
 0.5  & 0.321 (0.009) & 0.48 (0.01) & 0.439 (0.012) & 0.505 (0.011) & 0.523 (0.011) & 0.557 (0.009) \\ 
 0.6  & 0.36 (0.011) & 0.462 (0.017) & 0.458 (0.013) & 0.5 (0.013) & 0.521 (0.014) & 0.575 (0.009) \\ 
 0.7  & 0.341 (0.013) & 0.506 (0.01) & 0.474 (0.011) & 0.503 (0.014) & 0.533 (0.013) & 0.572 (0.011) \\ 
 0.8  & 0.326 (0.017) & 0.519 (0.013) & 0.529 (0.012) & 0.502 (0.018) & 0.55 (0.015) & 0.564 (0.015) \\ 
 0.9  & 0.283 (0.016) & 0.492 (0.012) & 0.515 (0.011) & 0.513 (0.016) & 0.541 (0.014) & 0.582 (0.012) \\ 
 1.0  & 0.232 (0.014) & 0.436 (0.017) & 0.438 (0.018) & 0.46 (0.023) & 0.467 (0.022) & 0.56 (0.017) \\ 
\bottomrule 
 \end{tabular} 
 \end{adjustbox}
    \caption{Data for Figure \ref{fig:adaptive.all}: Average out-of-sample R2 (and standard errors) for the different methods, in different missingness settings, as the proportion of potentially missing covariates contribution to the signal increases.}
    \label{tab:fig5.data}
\end{table}

\FloatBarrier
{Furthermore, } for each unique dataset, we count how often the adaptive linear regression model (resp. joint impute-then-regress model) outperforms mean-impute-then-regress method over 10 random training/test set splits and display the density for this percentage of ``wins'' in Figure \ref{fig:winrate}. The trend is clear: as the missingness mechanisms departs further away from the MAR assumption, adaptive models improve more often over impute-then-regress. 
Comparing these profiles with those obtained on the real $Y$ experiments (Figure \ref{fig:winrate.realY}), we observe that the benefit from our method is not as obvious and systematic as in the semi-synthetic AM case, suggesting that, in our library of real-world datasets, the data is more often missing at random than adversially missing.
\begin{figure}
    \centering
    \begin{subfigure}[t]{.45\textwidth}
    \centering
    \includegraphics[width=\textwidth]{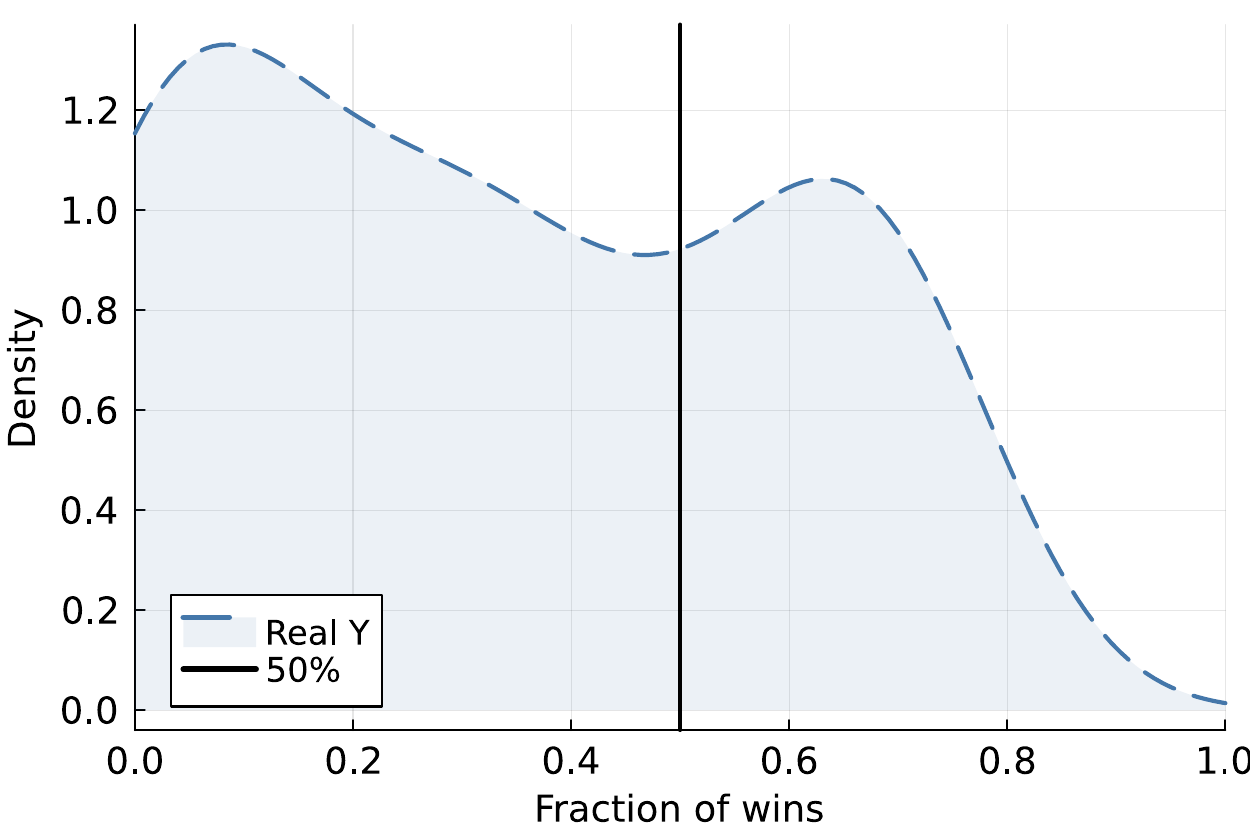}
    \caption{Adaptive linear regression}
    \end{subfigure} %
    \begin{subfigure}[t]{.45\textwidth}
    \centering
    \includegraphics[width=\textwidth]{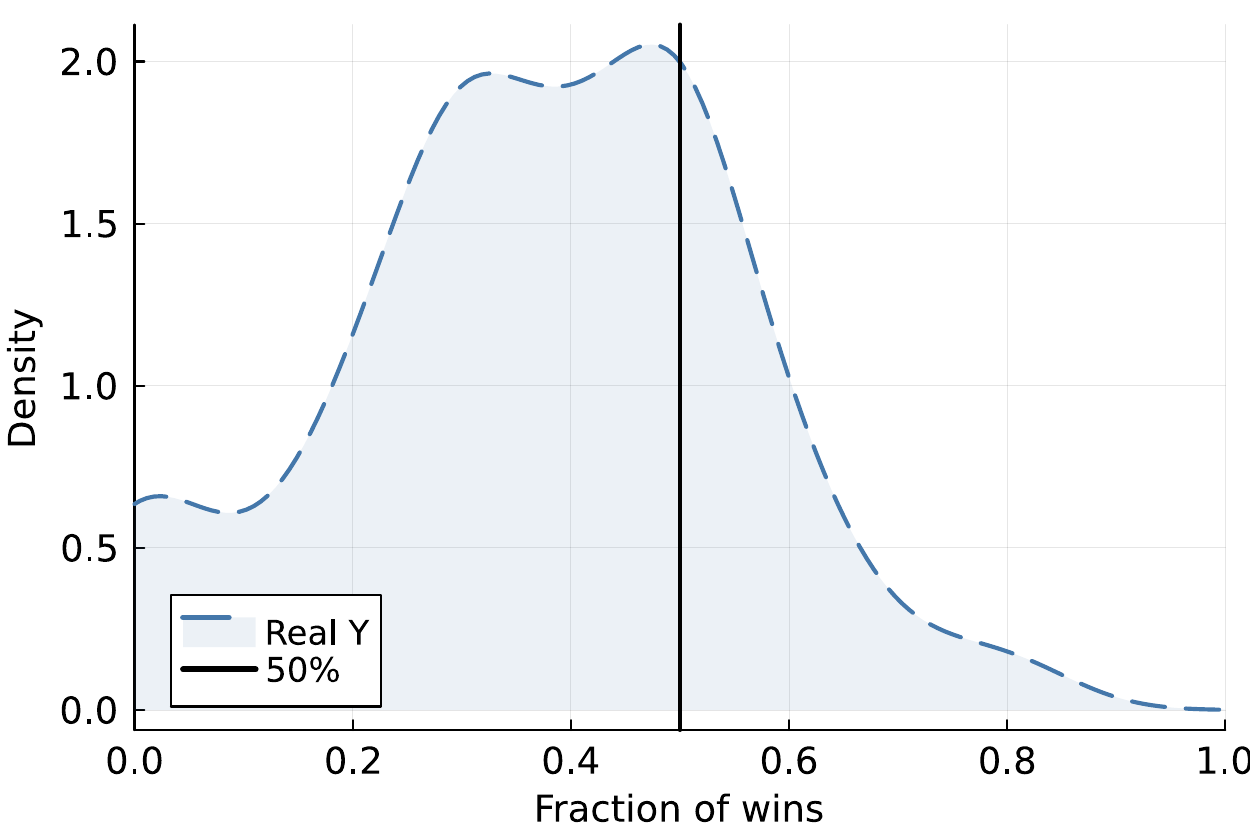}
    \caption{Joint impute-then-regress}
    \end{subfigure}
    \caption{Frequency of ``wins'' from adaptive models vs. mean-impute-then-regress ones.} \label{fig:winrate.realY}
 \end{figure}

\end{appendices}

\bibliography{PHD}

\end{document}